\newcommand{\RR}{\mathbb{R}}
\newcommand{\EE}{\mathbb{E}}
\newcommand{\PP}{\mathbb{P}}
\newcommand{\iid}{\overset{iid}{\sim}}
\newtheorem{thm}{Theorem}
\newtheorem{prop}[thm]{Proposition}
\newtheorem{eg}[thm]{Example}
\newtheorem{Lem}[thm]{Lemma}
\newcommand{\bds}{\begin{displaystyle}}
\newcommand{\eds}{\end{displaystyle}}
\newcommand{\bpm}{\begin{pmatrix}}
\newcommand{\epm}{\end{pmatrix}}
\newcommand{\bvm}{\begin{vmatrix}}
\newcommand{\evm}{\end{vmatrix}}
\def\cred{\color{red}}
\title{Testing for geometric invariance and equivariance}
\author{
  Louis G. Christie\\
  Statistical Laboratory, DPMMS\\
  University of Cambridge\\
  Cambridge, UK, CB3 0WB \\
  \texttt{lgc26@cam.ac.uk} \\
  \And
  John A. D. Aston \\
  Statistical Laboratory, DPMMS\\
  University of Cambridge\\
  Cambridge, UK, CB3 0WB \\
  \texttt{j.aston@statslab.cam.ac.uk} 
}
\begin{document}

\maketitle

\begin{abstract}
  Invariant and equivariant models incorporate the symmetry of an object to be estimated (here non-parametric regression functions $f : \mathcal{X} \rightarrow \mathbb{R}$). These models perform better (with respect to $L^2$ loss) \cite{lyle2020benefits}, \cite{bietti2021sample}, and are increasingly being used in practice  \cite{bronstein2021geometric}, but encounter problems when the symmetry is falsely assumed. In this paper we present a framework for testing for $G$-equivariance for any semi-group $G$. This will give confidence to the use of such models when the symmetry is not known a priori. These tests are independent of the model and are computationally quick, so can be easily used before model fitting to test their validity. 
\end{abstract} 
 
\section{Introduction}
Many objects we wish to model obey symmetries. A particularly simple example of this is in time series: if our series is seasonal then it is invariant to translations by its period. In imagery, true classifications are invariant under reflections and rotations of the image. In chemoinformatics molecules are invariant under certain permutations of the atoms.

There have been many successful methods used to incorporate these symmetries in models, particularly data augmentation \cite{chen2020group} and feature averaging \cite{lyle2020benefits}. Recently, Bronstein et al. have formulated the notion of \textit{Geometric Deep Learning} \cite{bronstein2021geometric} and demonstrated the applicability of these symmetries to numerous network architectures: symmetries of the symmetric group $S_n$ for Graph Neural Networks (GNNs), $SO(3)$ rotational symmetries for Spherical Convolutional Neural Networks, and translation group $\mathbb{T}$ invariance for standard Convolutional Neural Networks, for examples. 

Modelling using these symmetries is a way to expose and exploit regularities in the structure of the object of interest. In many cases (as with those above) we know that these symmetries are present, but in other cases we do not. One example of this is in protein volume estimation, where some proteins exhibit rotational symmetries of some unknown order or about some unknown axis \cite{jiang2017atomic}. We would like to be able to use the machinery of equivariant modelling to these situations, but need confidence that it is worth building the symmetrised model. 

The problem is that if the symmetry does not exist, then the model that enforces it converges to the wrong result, as shown in figure \ref{fig:non_inv_error}, and it can do this even with lower test error in some cases. Thus in this paper we present a testing framework for the presence of these symmetries in regression functions, and classifications functions through probabilistic regression models. We present two tests, one that uses an explicit bound on the possible variations if the symmetry were present, and another that uses more computational power to estimate these possible variations. Both tests are shown to work well in low dimensional simulations and can correctly identify a lack of reflective symmetries in the MNIST dataset (i.e., identifying $3 \neq \mathcal{E}$).

Importantly, these tests can be computed before training the equivariant model. Thus they can be used as a validation tool before spending large computational time (in back propagation) or human time (in model tuning). In this sense they can form an important part of the machine learning pipeline for the situations where the symmetry is not known but may be applicable.

\section{Background and our contributions}

Mathematically, we can describe the problem as follows. Consider learning a regression function $f : \mathcal{X} \rightarrow \mathcal{Y}$ with some (perhaps noisy) i.i.d. data $\mathcal{D} = \{ (X_i, Y_i) \}_{i = 1}^n $ with $\EE( Y_i \mid X_i = x) = f(x)$. In this paper $\mathcal{X}$ can be any metric space and $\mathcal{Y}$ can be any normed linear space. Let $G$ be a semi-group that acts on both $\mathcal{X}$ and $\mathcal{Y}$, written $g \cdot x$ and $g \star y$. We say that a regression function $f : \mathcal{X} \rightarrow \mathcal{Y}$ is \textbf{$G$-invariant} if $\PP( f(g \cdot X_1) = f(X_1) ) = 1$  for all $g \in G$. We call $f$ \textbf{$G$-equivariant} if $\PP(  f(g \cdot X_1) = g \star f(X_1) ) = 1$, noting that invariance is a special case where $G$ acts trivially on $\mathcal{Y}$. Examples of an invariant classification function and an equivariant clipping mask on images are shown in figure \ref{fig:in_eq_eg}. 
 
\begin{figure}[h]
	\centering
		\begin{tabular}{m{7em} m{2em} m{1em} m{6em} m{7em} m{2em} m{7em}}
				\includegraphics[scale=0.5]{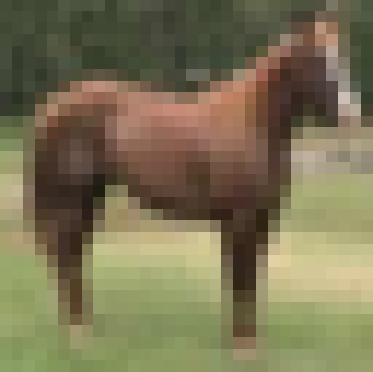} & $\overset{f_c}{\longrightarrow}$ & ``Horse'' &  &
					\includegraphics[scale=0.5]{figs/im5.png} & $\overset{f_m}{\longrightarrow}$ & \includegraphics[scale=0.8]{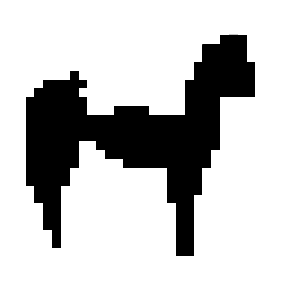} \\

				\includegraphics[scale=0.5]{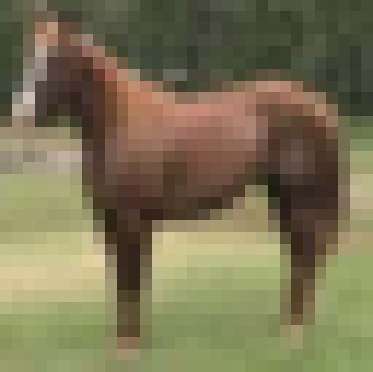} & $\overset{f_c}{\longrightarrow}$ & ``Horse'' &  &
					\includegraphics[scale=0.5]{figs/im5_ref.png} & $\overset{f_m}{\longrightarrow}$ & \includegraphics[scale=0.8]{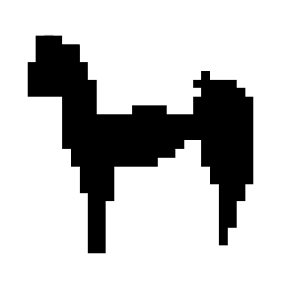}
		\end{tabular}	
		
	\caption{Example images of a horse from CIFAR-10, and the action of the horizontal reflection $R_h$. The function $f_c$ assigns classifications of a horse in the image, which is assumed to be invariant to reflections. The function $f_m$ assigns a clipping mask of the horse in the image, and a reflection of the input causes the same reflection in the output. }
	\label{fig:in_eq_eg}
\end{figure}

The information of this symmetry can be used to significantly improve models: in the case of finite semi-groups by effectively gaining $(|G| - 1) \times n$ extra data points; and for infinite groups more so by reducing high dimensional problems to low dimensional ones. Given some estimator $\hat{f}(x, \mathcal{D})$, we can include $G$-equivariance information by: (1) data augmentation: $\hat{f}_G( x, \mathcal{D}) = \hat{f}( x, G \cdot \mathcal{D})$ where $G \cdot \mathcal{D} = \{ (g\ \cdot X_i, g \star Y_i) : g \in G, (X_i, Y_i) \in \mathcal{D} \} $; (2) kernel symmetrisation: replacing a kernel $K$ with $K_G(x,y) = \frac{1}{|G|} \sum_{g \in G}K( g \cdot x, y)$; or by (3) feature averaging: $\hat{f}_G (x, \mathcal{D} ) = S_G \hat{f} = \tfrac{1}{|G|} \sum_{g \in G} g^{-1} \star \hat{f}(g \cdot x, \mathcal{D})$. In the case of infinite semigroups, the sums can be replaced with expectations for some $G$ valued random variable $g$.

In the case of feature averaging with a group $G$, if $f$ is $G$-invariant (so $f = S_Gf $ in $L^2(\mathcal{X})$) we can know immediately that
\begin{equation}
\label{eq:proj_bound}
\| S_G \hat{f}  - f \|_2^2 = \| S_G \hat{f} - S_G f \|_2^2 \leq \| S_G \|^2 \| \hat{f} - f \|_2^2 \leq \| \hat{f} - f \|_2^2  	
\end{equation}
i.e., the expected integrated squared error of the symmetrised estimator must be at most the error of the unsymmetrised model for any target $f$ because $S_G : L^2(\mathcal{X} ) \rightarrow L^2_G(\mathcal{X})$ is a projection to the $G$-invariant subspace $L_G^2(\mathcal{X})$ and thus has operator norm $1$. This and similar results have been shown by \cite{chen2020group} (for data augmentation), \cite{bietti2021sample} (for Kernel symmetrisation), and \cite{elesedy2021provably} and \cite{lyle2020benefits} (for feature averaging). 

 However, this is limited to cases where $f$ is known to be invariant or equivariant a priori. If we model using an incorrect symmetry, then the model will learn the wrong function: $S_G f \neq f$. In fact, the result of equation \ref{eq:proj_bound} depends on the true $G$-invariance of $f$. If $f$ is not $G$-invariant, then we know that:
\begin{equation}
	\| S_G \hat{f} - f \|_2^2 = \| S_G ( \hat{f} - f)  + S_G f - f \|_2^2 \overset{p}{\rightarrow} \|  S_G f - f \|_2^2 > 0
\end{equation}
so the feature averaged estimator loses the universal consistency of the original estimator by creating extra irreducible error.   This means that an inappropriately symmetrised model can have significantly worse performance, as shown in figure \ref{fig:non_inv_error}.

\begin{figure}[h]
	\centering
	\begin{subfigure}{.45\textwidth}
		\includegraphics[scale=0.29]{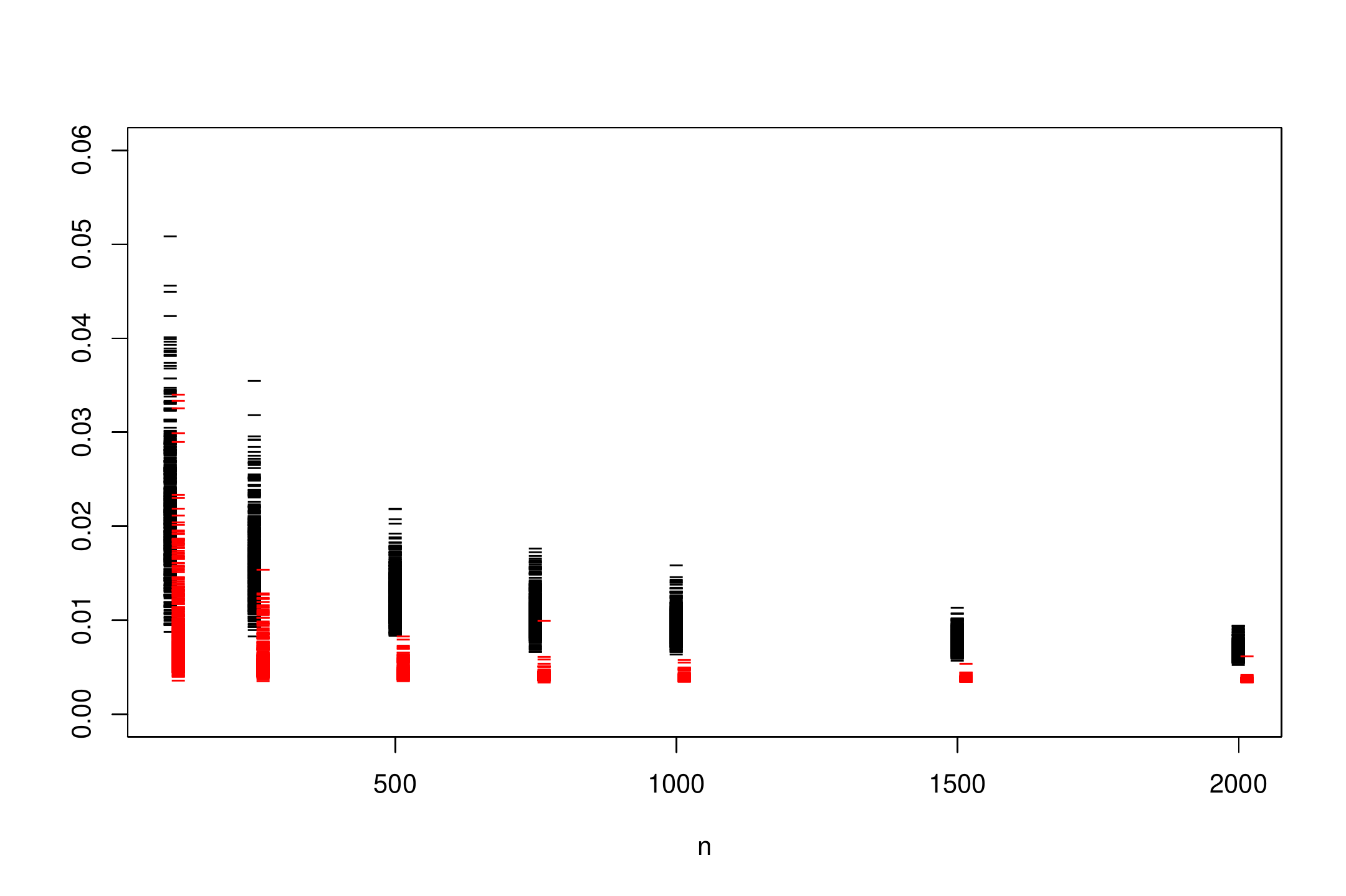}
		\caption{Test errors for a $G$ invariant target.} 
	\end{subfigure}
	\begin{subfigure}{.45\textwidth}
		\includegraphics[scale=0.29]{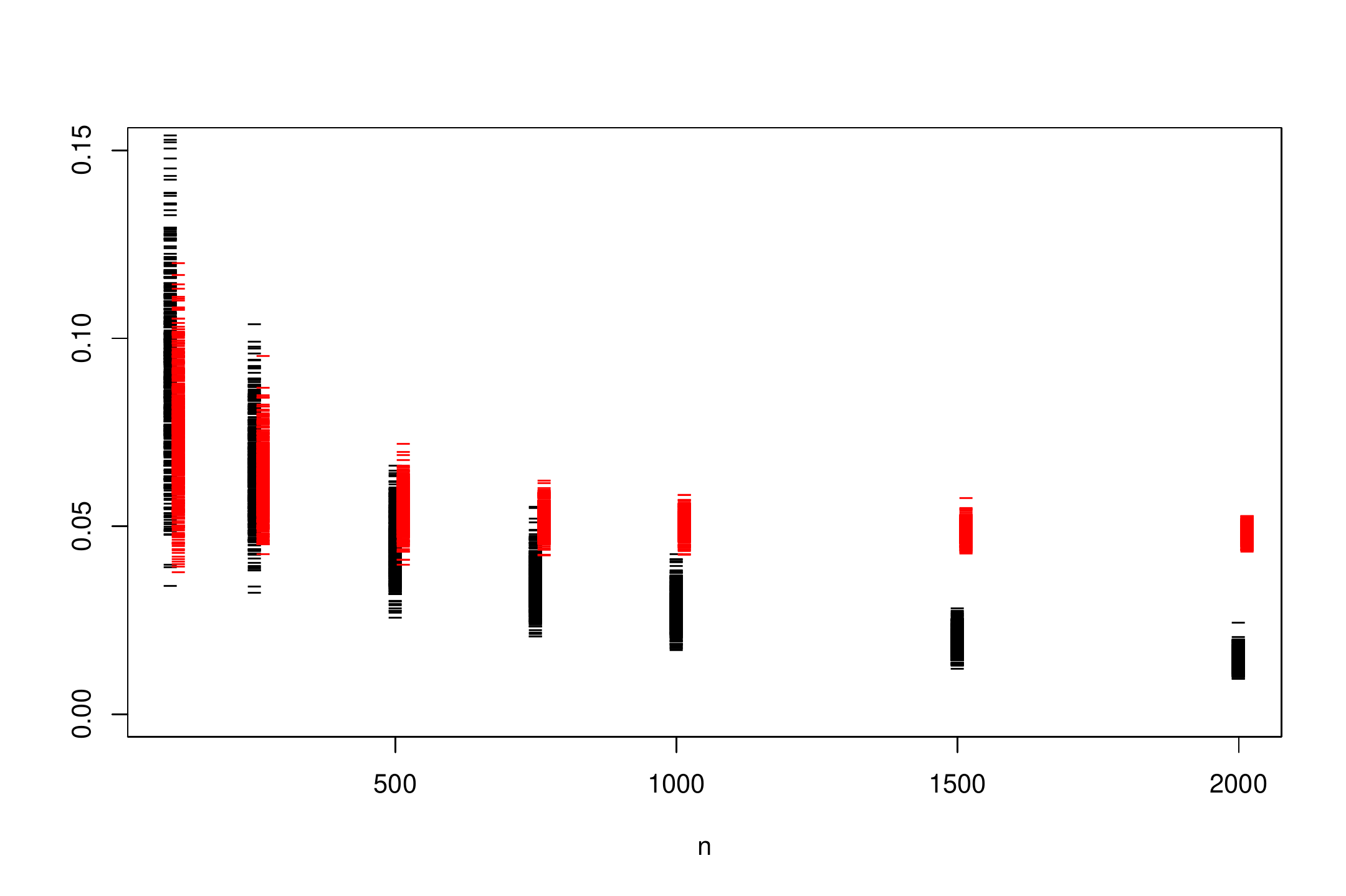}
		\caption{Test Errors for a non invariant target.} 
		\label{sfig:as_error}
	\end{subfigure}	
	\caption{Test errors of simulations a local constant estimator with an asymptotically optimal bandwidth and a rectangular kernel (black lines), and a kernel symmetrised modification (red lines). For the invariant regression function $f$ the symmetrised model always outperforms the original. For the non-invariant regression function, the symmetrised model has an irreducible error (fig \ref{sfig:as_error}). Full details of these simulations are available in appendix \ref{sapp:fig_2_code}.}
	\label{fig:non_inv_error}
\end{figure}

In this paper we present methods for testing for invariance and equivariance of a regression function $f$. This applies directly to classification setting as well via the probabilistic regression functions used in such models. This allows a modeller to quickly identify scenarios that symmetrised models are and are not applicable to, and allow for their more widespread use. 

These tests compare the dataset $\{ ( d_\mathcal{X}( X_i, X_j), | Y_i - Y_j | ) \}_{i,j = 1}^n$ to the symmetrised version $\{ ( d_{\mathcal{X}} (g \cdot X_i , X_j ), |g \star Y_i - Y_j | ) \}_{i,j = 1, g \in G}^n$. If $f$ is indeed $G$-equivariant, then these will be drawn from the same distribution, but if not we will see a higher spread in the symmetrised version. In particular, if we know that $f$ has a bound on it's variation, for example the Lipschitz $|f(x) - f(y)| \leq L \| x - y \|$, then we can quantify the probabilities of seeing the symmetrised points in the upper left region in figure \ref{fig:data_shift}. 

\begin{figure}[h]
	\centering
	\begin{subfigure}{.45\textwidth}
		\begin{tikzpicture}
		\node(a){\includegraphics[scale=0.42]{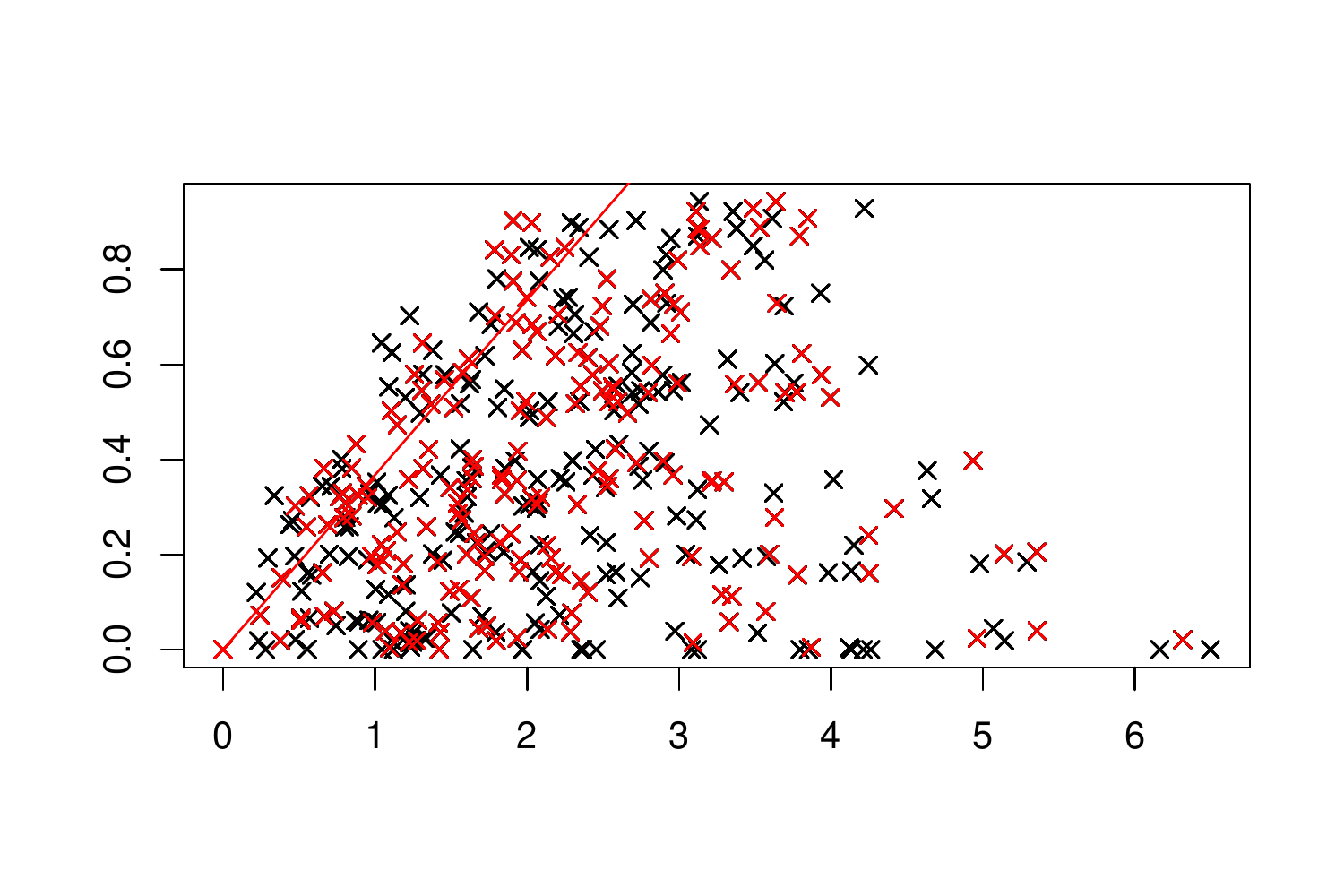}};
		\end{tikzpicture} 		\caption{Datapoints for $G \star \mathcal{D}$.} 
	\end{subfigure}
	\begin{subfigure}{.45\textwidth}
		\begin{tikzpicture}
		\node(a){\includegraphics[scale=0.42]{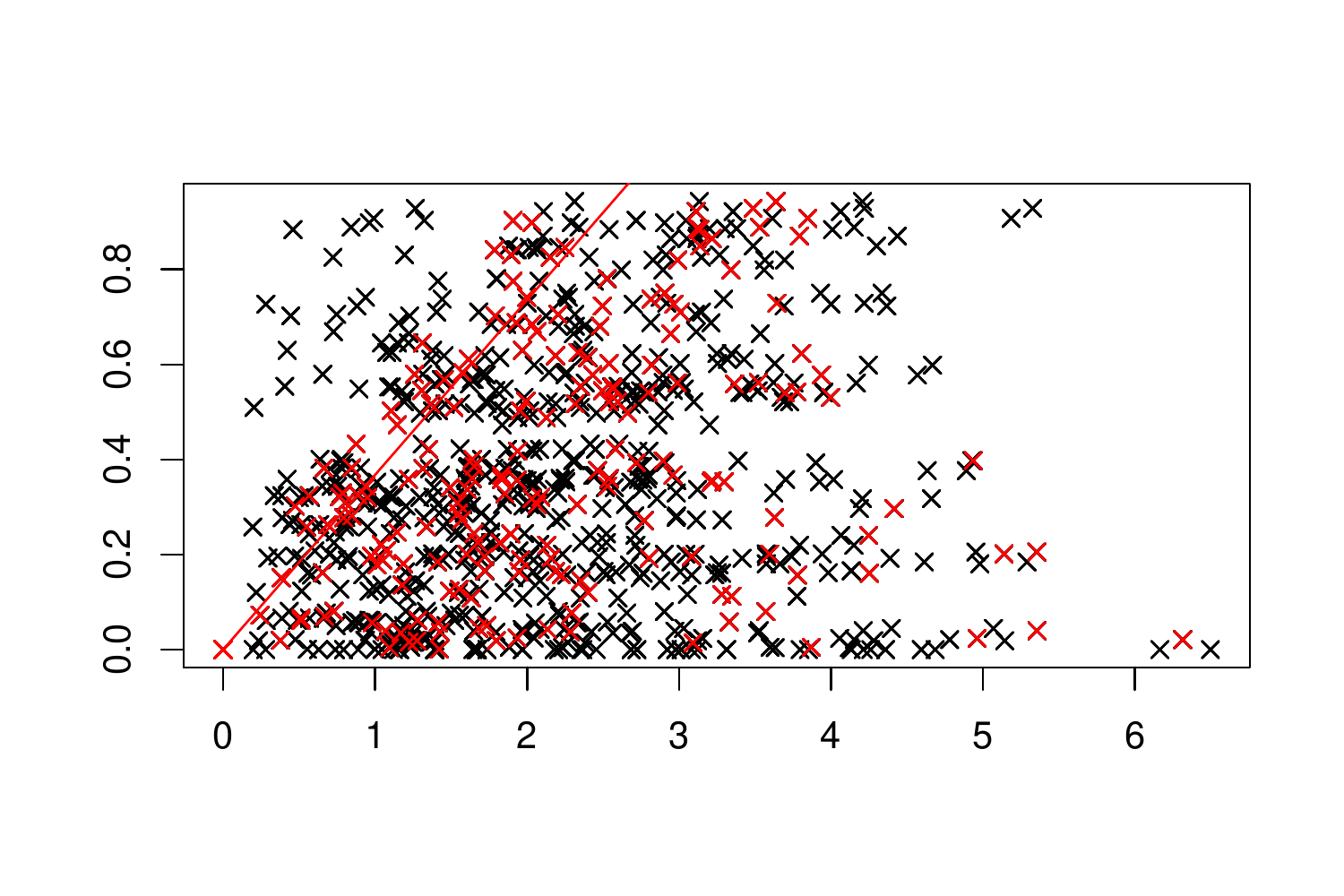}};
		\node at(a.center)[draw, red,line width=1pt,ellipse, minimum width=50pt, minimum height=30pt,rotate=45,yshift=45pt,xshift=-20]{};
		\end{tikzpicture} 
		\caption{Datapoints for $G \cdot \mathcal{D}$. } 
		\label{sfig:as_error}
	\end{subfigure}	
	\caption{Output distances $|Y_i - Y_j|$ vs input distances $d_\mathcal{X}(X_i, X_j)$, as well as a red line for the variation bound $V(x,y)$, for datasets from section \ref{ssec:sims}. Points from $\mathcal{D}$ only are shown in {\cred red}. When $f$ is not $G$-invariant, as in figure \ref{sfig:as_error}, we see many more points that are further above the red line. These points are circled in subfigure \ref{sfig:as_error}.}
	\label{fig:data_shift}
\end{figure}

This behaviour of the pairwise distances allows us to do the following: 
\begin{itemize}
	\item We introduce (in Section \ref{sec:testing}) the \textit{asymmetric variation test} $(G, \cdot, \star)$-equivariance of $f$, when $\star$ is a linear action and $\mu_\epsilon$ (i.e., the distribution of the i.i.d. noise variables $\epsilon_i$) is $(G, \star)$-invariant. This test relies the practitioner having a known bound on the variation of the regression function $f$. The key idea is that the asymmetry often breaks this bound on the variation, so we will more often see positive values of $D_{ij}^g = | g \star Y_i - Y_j| - V(g \cdot X_i, X_j)$ where $V(u,w) = \max_{f \in \mathcal{F}} | f(u) - f(w) |$ over the class of bounded variation functions $\mathcal{F}$, which the practitioner assumes that $f$ belongs to.  We show that this is a consistent hypothesis test under mild conditions on the noise in section \ref{ssec:consistency}.
	\item We introduce (in Section \ref{sec:assumptionless}) the \textit{permutation variant} of the asymmetric variation test. This drops the requirement of knowledge of $V$ by using a permutation style test on the quantities $S_{ij}^g = |g \star Y_i - Y_j | / \mathcal{V}(g \cdot X_i, X_j)$. Here $\mathcal{V}(x,y)$ is a function for which $|f(x) - f(y)| = O(\mathcal{V}(x,y))$ for all $f$ and $x,y \in \mathcal{X}$. This is essentially assuming only the order of $V$ and allows us to ignore $V$'s scale, but requires much larger computational cost. 
\end{itemize}

\section{Asymmetric Variation Test}
\label{sec:testing}

Let $G$ be a group that acts on both $\mathcal{X}$ and $\mathcal{Y}$, with the action on $\mathcal{Y}$ linear and for which $g \star \epsilon_i \overset{D}{=} \epsilon_j$ for all $g \in G$ and $i,j$. An important example of such an action are the axis permutations, including the image transformations of figure \ref{fig:in_eq_eg} (when the noise is isotropic). It also includes all geometric invariances, as the trivial action is linear and $\epsilon$ preserving. 

We first aim to test the hypothesis $H_0 : f$ is $(G, \cdot, \star)$-equivariant against $H_1 : f$ is not $(G, \cdot, \star)$-equivariant. The methodology outlined relies on two key assumptions: (1) a known bound on local variations of $f$; and (2) a known concentration inequality for the noise variables. There are two key choices for the practitioner: (1) a $G$ valued random variable $g$; and (2) a threshold value $t$ for the concentration inequality.

\subsection{Methodology}

Suppose that $\mathcal{F}$ is a class of bounded variation for which the practitioner assumes $f \in \mathcal{F}$, and let $V(x,y) = \sup_{f \in \mathcal{F} }  |f(x) - f(y) |$. An example of such a class are $\alpha$-H\"{o}lder continuous functions $\mathcal{F}(L, \alpha)$ with $|f(x) - f(y) | \leq L d_\mathcal{X}(x,y)^\alpha$ for $\alpha \in (0,1]$, for which $V(x,y) = L d_\mathcal{X}(x,y)^\alpha$. 

Let the independent mean zero additive noise $\epsilon_i = Y_i - f(X_i)$ be such that $ \PP( | \epsilon_i - \epsilon_j | > t ) \leq p_t$. An example of this would be $p_t = \frac{ 2 \sigma }{ t  } \tfrac{ \exp( - t^2 / 4 \sigma^2 )}{\sqrt{2\pi}}$ for iid gaussian $\epsilon_i$ with variance $\sigma^2$ (Proposition 2.1 of \cite{adamsMA3K0notes}) in each dimension of $\mathcal{Y}$. Let $g \sim \mu_g$ be any $G$ valued random variable. 

Consider bounding the $p$-value $\PP( \mathcal{D} \text{ or less likely} \mid H_0 )$ via algorithm \ref{algo:known}. If $f$ is $(G, \cdot, \star)$-equivariant then  
\begin{equation}
\label{eq:like_bound}
|g \star Y_i - Y_j| \leq |f(g \cdot X_i) - f(X_j)| + |g \star \epsilon_i - \epsilon_j| \leq V(g \cdot X_i, X_j) + |g \star \epsilon_i - \epsilon_j|,
\end{equation}
we know that $D_{ij} \leq |g \star \epsilon_i - \epsilon_j| \overset{D}{=} | \epsilon_i - \epsilon_j |$. For fixed $m$, the $D_{I(j) J(j)}^{g_j}$ are asymptotically independent (because it is vanishingly unlikely that we sample the same $g_j \cdot X_{I(j)}$ or that two $g_j \cdot X_{I(j)}$ share a nearest neighbour). Thus under the null hypothesis, $N_t^g$ is stochastically bounded by a $\mathrm{Binom}(m, p_t)$ variable, which allows us to bound the $p$-value from above by the return value $p_{val}$.

\begin{algorithm}[h]
\caption{Asymmetric Variation Test}
\label{algo:known}
\begin{algorithmic}[1]
\Procedure{AsymVarTest}{$\mathcal{D}$, $V$, $\mu_g$, $t$, $p_t$, $m$}
	\For{$j \in \{1, \dots, m \}$}
		\State $g_j \leftarrow \mathrm{Sample}( \mu_g )$
		\State $I(j) \leftarrow \mathrm{Sample}( \{1, \dots, n \} )$
		\State $J(j) \leftarrow$ Index of Nearest Neighbour to  $g \cdot X_{I(j)}$ in $\{ X_j \}_{j = 1}^n$
		\State $D_{I(j) J(j)}^{g_j} \leftarrow | g \star Y_{I(j)} - Y_{J(j)} | - V( g \cdot X_{I(j)}, X_{J(j)} )$
	\EndFor
	\State $N_t^g \leftarrow | \{ D_{I(j) J(j)}^{g_j} \geq t \} |$
	\State $p_{val} \leftarrow \sum_{k = N_t^g}^{m} \binom{m}{k} p_t^k (1 - p_t)^{m - k}$
	\State \Return $p_{val}$
\EndProcedure
\end{algorithmic}	
\end{algorithm}

\subsection{Choices of $t$ and of $g$}

The methodology presented here works for any choice of $t$ and variable $g$, though particular choices of these will affect the power of the test. For example, if we choose $t$ such that $p_t \geq 1$ then our $p$-value will always be $1$. Similarly if $g = e \in G$ almost surely then we will also only reject with probability at most $\alpha$. 

For the choice of $t$, we suggest calculating $N_t^g$ from the sample of $D_{I(j)J(j)}^{g_j}$ at some grid of $t$ values $t_0 < t_1 < \cdots < t_k$ with the values of $p_{t_i}$ spread over the interval $(0,1)$, and then taking the $p$-value of $H_0$ as the minimum $p$-value of each $N^g_{t_i}$. This is justified as the information of the test is entirely contained in the set $\{ D_{I(j)J(j)}^{g_j} \}_{j = 1}^m$, i.e., since the $p\text{-value}$ is at most $\PP( N_t^g \geq k_t \mid H_0 )$ 
for all $t$, we can take an infimum over $t$ on the right hand side. 

For the choice of $g$, we suggest using a uniform distribution only on some set of generators of $G$. This means that we don't sample the identity or other elements that generate subgroups of $G$ that $f$ may be equivariant under, but if there is an element that breaks the equivariance then one of the generators will too.

\subsection{Consistency of this test}
\label{ssec:consistency}

Under some mild conditions on the noise distribution, and with $m$ set at $n$ for all $n$, we can prove that the asymmetric variation test is consistent. 

\begin{prop}
	\label{prop:cons}
	Set $m = n$ and fix $t > 0$. Suppose that the law of $X$ has a dense support on $\mathcal{X}$. Suppose that the noise concentration bound is tight: $p_t = \PP( |\epsilon_i + \epsilon_j| > t)$ and that $\epsilon$ admits a density $f_Y$ with respect to Lebesgue measure on $\mathcal{Y}$ that is decreasing in $|y|$. Then the asymmetric variation test is consistent, i.e. $p_{val} \mid H_1 \overset{p}{\rightarrow} 0$.
\end{prop}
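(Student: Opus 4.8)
The plan is to show that under $H_1$ the statistic $N_t^g$ grows like $\bar p\, m$ for a constant $\bar p > p_t$, and then note that this forces $p_{val}\overset{p}{\rightarrow}0$: since $m=n\to\infty$ and $p_{val}=\PP(\mathrm{Binom}(m,p_t)\ge N_t^g)$, on the high-probability event $\{N_t^g\ge \tfrac12(\bar p+p_t)m\}$ a Chernoff bound for the binomial gives $p_{val}\le\PP(\mathrm{Binom}(m,p_t)\ge\tfrac12(\bar p+p_t)m)\le e^{-cm}\to 0$. So the whole proof reduces to identifying $\lim_n N_t^g/m$ and checking it exceeds $p_t$.

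First I would pin down the limiting law of a single summand $D^{g_j}_{I(j)J(j)}$. Abbreviate $I=I(j)$, $g=g_j$, $J=J(j)$. Because the law of $X$ has dense support on the metric space $\mathcal X$, a nearest-neighbour argument gives $d_{\mathcal X}(X_J,g\cdot X_I)\to 0$ and $J\ne I$ with probability tending to one; since (for the assumed class, as in the H\"older example) $|f(X_J)-f(g\cdot X_I)|\le V(g\cdot X_I,X_J)\to 0$, one gets
\[
D^{g_j}_{I(j)J(j)}=\bigl|\,g\star Y_I-Y_J\,\bigr|-V(g\cdot X_I,X_J)=\bigl|\,\Delta_g(X_I)+g\star\epsilon_I-\epsilon_J\,\bigr|+o_{\PP}(1),
\]
where $\Delta_g(x):=g\star f(x)-f(g\cdot x)$ is the equivariance defect. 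As $J$ is chosen from the covariates only, $\epsilon_J$ is (asymptotically, on $\{J\ne I\}$) independent of $(X_I,g,\epsilon_I)$ with law $\mu_\epsilon$; combining this with the standing assumption $g\star\epsilon\overset{D}{=}\epsilon$ and the symmetry of $f_Y$ (implied by its being decreasing in $|y|$) gives $g\star\epsilon_I-\epsilon_J\overset{D}{\rightarrow}\zeta$, where $\zeta$ is independent of $(X,g)$ and $\zeta\overset{D}{=}\epsilon+\epsilon'$ for independent $\epsilon,\epsilon'\sim\mu_\epsilon$ --- precisely the law appearing in the noise bound. Hence $\PP(D^{g_j}_{I(j)J(j)}\ge t)\to\bar p:=\EE_{X,g}\bigl[h(\Delta_g(X))\bigr]$ with $h(\delta):=\PP(|\delta+\zeta|\ge t)$, and tightness of $p_t$ gives $h(0)=p_t$.

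The remaining two points are $\bar p>p_t$ and the concentration $N_t^g/m\to\bar p$. For the first, the key input is the shift inequality $h(\delta)\ge h(0)=p_t$, strict for $\delta\ne 0$: writing $h(\delta)=1-\PP(\zeta\in B(-\delta,t))$ and using a layer-cake / rearrangement argument (the density of $\zeta$ is $f_Y*f_Y$, and among balls of a fixed radius the one centred at the origin captures the most mass of a density decreasing in $|y|$), $\PP(\zeta\in B(z,t))$ is maximised over $z$ at $z=0$, strictly when $z\ne 0$. Since $h$ is continuous with $h(\delta)\to 1$ as $|\delta|\to\infty$, we get $c_1:=\inf_{|\delta|\ge\delta_1}h(\delta)-p_t>0$ for every $\delta_1>0$. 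Under $H_1$ --- and provided $\mu_g$ charges a group element witnessing the failure of equivariance, as ensured by the suggested choice of $\mu_g$ --- there is a $g\in\mathrm{supp}(\mu_g)$ with $\PP_X(\Delta_g(X)\ne 0)>0$, so $\theta:=\PP_{X,g}(|\Delta_g(X)|\ge\delta_1)>0$ for some $\delta_1>0$, whence $\bar p\ge p_t+\theta c_1>p_t$. For the concentration, conditioning on the sample $\mathcal D_n$ makes the draws $(g_j,I(j))_{j\le m}$ i.i.d., so $N_t^g\mid\mathcal D_n\sim\mathrm{Binom}(m,\hat p_n)$ with $\hat p_n=\PP(D^{g}_{IJ}\ge t\mid\mathcal D_n)$, and Hoeffding gives $N_t^g/m=\hat p_n+o_{\PP}(1)$; a second-moment estimate, using the asymptotic independence of summands with distinct $I(j)$ already noted before the algorithm (distinct query points $g_j\cdot X_{I(j)}$ have distinct nearest neighbours with probability $\to 1$), gives $\mathrm{Var}(\hat p_n)\to 0$, hence $\hat p_n\to\bar p$ and $N_t^g/m\to\bar p$ in probability, which with the first paragraph completes the proof.

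I expect the main obstacles to be the nearest-neighbour step and the decoupling estimate in the general metric-space setting: one must ensure $g\cdot X_I$ is approximable by the sample (which needs $\mathrm{supp}(\mu_X)$, or its closure, to be essentially $G$-invariant, and must also handle fixed points, where $J=I$ and $D^g_{IJ}=|g\star\epsilon_I-\epsilon_I|$), bound the probabilities of the coincidences $J(j)=I(j)$ and $J(j)=J(j')$ uniformly enough to feed the second-moment computation, and fix the exact regularity of $\mathcal F$ (an equicontinuity condition guaranteeing $V(x,y)\to 0$ as $d_{\mathcal X}(x,y)\to 0$) under which $V$ passes to the limit. The rearrangement inequality $h(\delta)>p_t$ is the other load-bearing step, but it is essentially classical once the central-peak property of $f_Y*f_Y$ is recorded.
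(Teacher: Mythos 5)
Your proposal is correct and follows essentially the same route as the paper's proof: nearest-neighbour convergence from the dense-support assumption, the decomposition of $D^{g_j}_{I(j)J(j)}$ into the equivariance defect $\Delta_g(X)=g\star f(X)-f(g\cdot X)$ plus a noise term with the law of $\epsilon+\epsilon'$, the rearrangement/unimodality inequality showing a nonzero shift strictly increases the tail probability past $p_t$, and a binomial stochastic-domination argument for the $p$-value. The only differences are cosmetic or in your favour: you finish with a Chernoff bound where the paper uses a de Moivre--Laplace argument, and you are more explicit than the paper about the uniform gap $\inf_{|\delta|\ge\delta_1}h(\delta)>p_t$, the decoupling of the summands, and the regularity needed for $V$ to vanish along the nearest-neighbour sequence.
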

	
The proof can be found in Appendix \ref{app:cons_proof}. The condition on the support of $\mu_X$ amounts to restricting $\mathcal{X}$ to the closure of the support as a practitioner would usually do. The condition that the noise admits a density is satisfied in many usual cases in regression (e.g. Gaussian noise). The condition that we can bound the concentration of the noise tightly is somewhat restrictive, but reflects the difficult of the problem - if the asymmetry is obscured by more noise then it is much more difficult to identify it. The condition could be relaxed to account for the noise threshold that still allow of consistency, but since in practice we cannot calculate this threshold without knowledge of $f$ we omit it from this paper.

\section{Permutation Variant of the Asymmetric Variation Test}
\label{sec:assumptionless}

The asymmetric variation test relies on both the existence of, and the knowledge of, the bound $V(x,y)$. In this section we show that we can remove some of the requirement of the knowledge at the cost of computational power. 

This test still has the assumption of a known order for the variation bound, but not not require any knowledge of the noise variables $\epsilon_i$ (other than that they are iid). The practitioner still has a choice of the random variable $g$, but now chooses a quantile $q$ instead of threshold(s) $t$.

Suppose that we know only the order of the bound $V$, i.e., we know some $\mathcal{V}(x,y)$ such that for all $f \in \mathcal{F}$ there exists some (unknown) $L_f$ with $|f(x) - f(y)| \leq L_f \mathcal{V}(x,y)$. Clearly any known $V$ satisfies this property, but it is weaker in that we do not need to know that bound exactly. A key example of $\mathcal{V}$ is $d_\mathcal{X}(x, y)^\alpha$ for $\alpha$-H\"{o}lder continuous functions in any $\mathcal{F}(L, \alpha)$ (whereas we would need a constant multiple of this for a class of particular $\alpha$-H\"{o}lder continuous functions). 

Let $S_{ij}^g = |g \star Y_i - Y_j| / \mathcal{V}(g \cdot X_i,X_j)$. Consider collecting $S^k = \{ S_{I(j)J(j)}^g \}_{j = 1}^m$ as in algorithm \ref{algo:known}, but where $J(j) \sim U( \{1, \dots, n\}$ instead of being the index of the nearest neighbour, and let $A_k$ be the $q^{th}$-quartile of this set for some chosen $q \in (0,1]$. Under the null hypothesis $H_0: f$ is $(G, \cdot, \star)$-equivariant, the distributions of $S_{ij}^g$ will be the same as $S_{ij}^e$, so we can run a permutation test (See \cite{good2005permutation}) on the set $\{A_k \}_{k = 1}^B$, comparing them to the $q^{th}$-quantile $A_0$ of $S^0 = \{ S_{I(j)J(j)}^e \}_{j = 1}^m$ (sampled in the same way, but with $g = e$ almost surely). We can approximate the $p$-value by the proportion of $A_k \leq A_0$. This is described in algorithm \ref{algo:perm}. 

\begin{algorithm}[h]
\caption{Permutation Variant of Asymmetric Variation Test}
\label{algo:perm}
\begin{algorithmic}[1]
\Procedure{PermVarTest}{$\mathcal{D}$, $\mu_g$, $\mathcal{V}$, $q$, $m$, $B$}
	\For{$k \in \{1, \dots, B \}$}
		\For{$j \in \{1, \dots, m \}$}
			\State $g_j \leftarrow \mathrm{Sample}( \mu_g )$
			\State $I(j), J(j) \leftarrow \mathrm{Sample}( \{1, \dots, n \} )$
			\State $S_{I(j) J(j)}^{g_j} \leftarrow | g_j \star Y_{I(j)} - Y_{J(j)} | / \mathcal{V}( g_j \cdot X_{I(j)}, X_{J(j)} )$
		\EndFor
		\State $A(k) \leftarrow \mathrm{quantile}( \{ S_{I(j) J(j)}^{g_j} \}_{j = 1}^m , q )$ \Comment{R syntax}
	\EndFor
	\For{$j \in \{1, \dots, m \}$}
		\State $I(j) \leftarrow \mathrm{Sample}( \{1, \dots, n \} )$
		\State $J(j) \leftarrow$ Index of Nearest Neighbour to  $g \cdot X_{I(j)}$
		\State $S_{I(j) J(j)}^{e} \leftarrow | Y_{I(j)} - Y_{J(j)} | / \mathcal{V}( X_{I(j)}, X_{J(j)} )$
	\EndFor
	\State $A_0 \leftarrow \mathrm{quantile}( \{ S_{I(j) J(j)}^{e} \}_{j = 1}^m , q )$
	\State $p_{val} \leftarrow |\{ k : A(k) \leq A_0 \}| / B$
	\State \Return $p_{val}$
\EndProcedure
\end{algorithmic}	
\end{algorithm}

\subsection{Finite sample effects and choice of quantile $q$}

Whilst it is true that $S^g_{ij} \overset{D}{=} S^e_{ij}$ for independent $X_i,X_j$, the finite sample estimates of these quantities are not equal. In fact, the distribution of $S^g_{ij}$ can be biased upwards because the action on $\mathcal{X}$ allows us to see more points with smaller $d_\mathcal{X}(X_i,X_j)$. As $m$ increases, we are more and more likely to see the outlying values of $S_{ij}^g$ compared to $S_{ij}^e$. This can cause a bias towards rejection in this permutation test, i.e., it is liberal for small values of $n$. This inexactness is a known issue with permutation tests, for example when testing the variance of univariate samples (as in \cite{good2005permutation}, \S 3.7.2), and approximate permutation tests as used here are still shown to be successful and useful. 

These problems are alleviated somewhat by using the quantile $q \in (0,1]$, as this is less sensitive to the outlying values than picking $q = 1$ (i.e., just going with the maximum of the $S_{I(j)J(j)}^{g_j}$). This does mildy reduce the power of the test, but improves the specificity significantly. We have found in simulations that using $q = 0.95$ works well in practice (see figure \ref{fig:effect_of_q} in appendix \ref{sapp:effect_q}).

\section{Numerical Experiments}

Here we run simulations of both the asymmetric variation test and the permutation variant for low dimensional examples and for the orientation of digits in the MNIST dataset ($d = 784$). All code is available in the supplementary material and on GitHub at \url{https://github.com/lchristie/testing_for_equivariance}.

\subsection{Simulations in low dimensions}
\label{ssec:sims}
Let $d \geq 2$ and set $\mathcal{X} = \RR^d$. Take $X_i \iid N(0,4I_d)$ and $\epsilon_i \iid N(0,\sigma^2 I_d)$. Consider the functions $f_d : \RR^d \rightarrow \RR$ be given by $(x_1, x_2, \dots, x_d) \mapsto \exp( - |x_1| )$. Let $G = \langle R_{\pi/2} \rangle$ act on $\RR^d$ via the distinct actions generated by $ R_{\pi/2} \cdot x = ( - x_2, x_1, x_3, \dots, x_d )$, $R_{\pi/2} \star x = (-x_1, -x_2, x_3, \dots, x_d)$, and let it act on $\RR$ by the trivial action $R_{\pi / 2} \bullet y = y$. Then we know that $f_d$ is $(G, \star, \bullet)$-invariant but not $(G, \cdot, \bullet)$-invariant for all $d$. 

We simulated tests of each of the hypothesis $H_0^{(1)} : f_d$ is $(G, \star, \bullet)$-invariant and $H_0^{(2)} : f_d$ is $(G, \cdot, \bullet)$-invariant, using both tests (algorithms \ref{algo:known} and \ref{algo:perm}). The estimated power graphs and empirical sizes are plotted in figure \ref{fig:est_power}, containing rejection probabilities at significance level $\alpha = 0.05$. We ran 100 simulations for each combination of $n = m \in \{ 20,30,40,50,60,70,80,90,100,120,150,200,250,300 \}$. The asymmetric variation tests had $t = 2 \sigma = 0.1$, $p_t = \frac{ 2 \sigma }{ t  } \tfrac{ \exp( - t^2 / 4 \sigma^2 )}{\sqrt{2\pi}}$ and the permutation variants had $B = 100$ and $q = 0.95$. We tested with $g_j \iid U( R_{\pi/2},  R_{\pi/2}^2,  R_{\pi/2}^3 )$ for all tests. Further simulations for other combinations of $n \neq m$ are available in appendix \ref{app:sims}, as well as for other regression functions.  
\begin{figure}[h]
	\centering
	\begin{subfigure}{.45\textwidth}
		\includegraphics[scale=0.42]{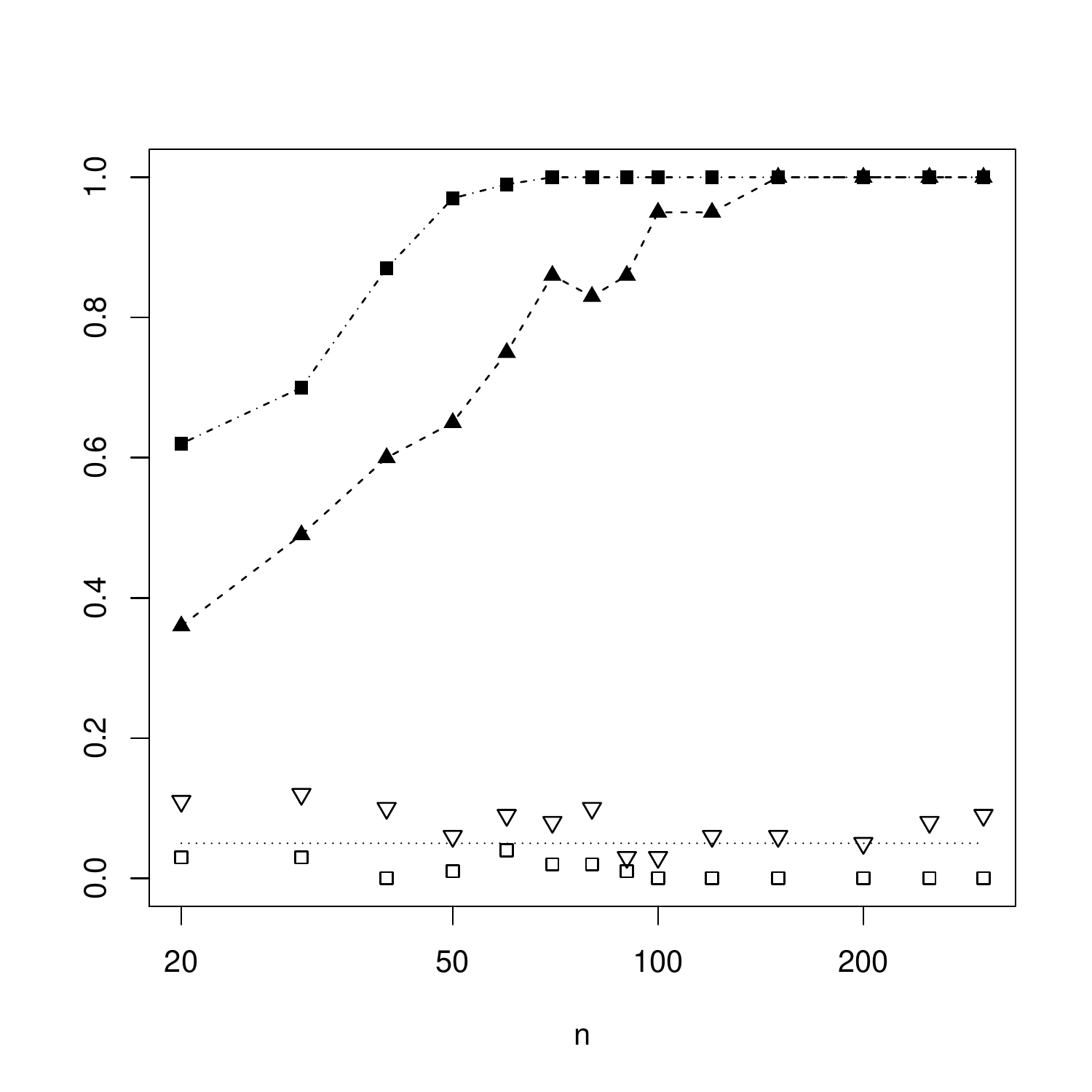}
		\caption{Rejection Proportions for $f_2$.} 
	\end{subfigure}
	\begin{subfigure}{.45\textwidth}
		\includegraphics[scale=0.42]{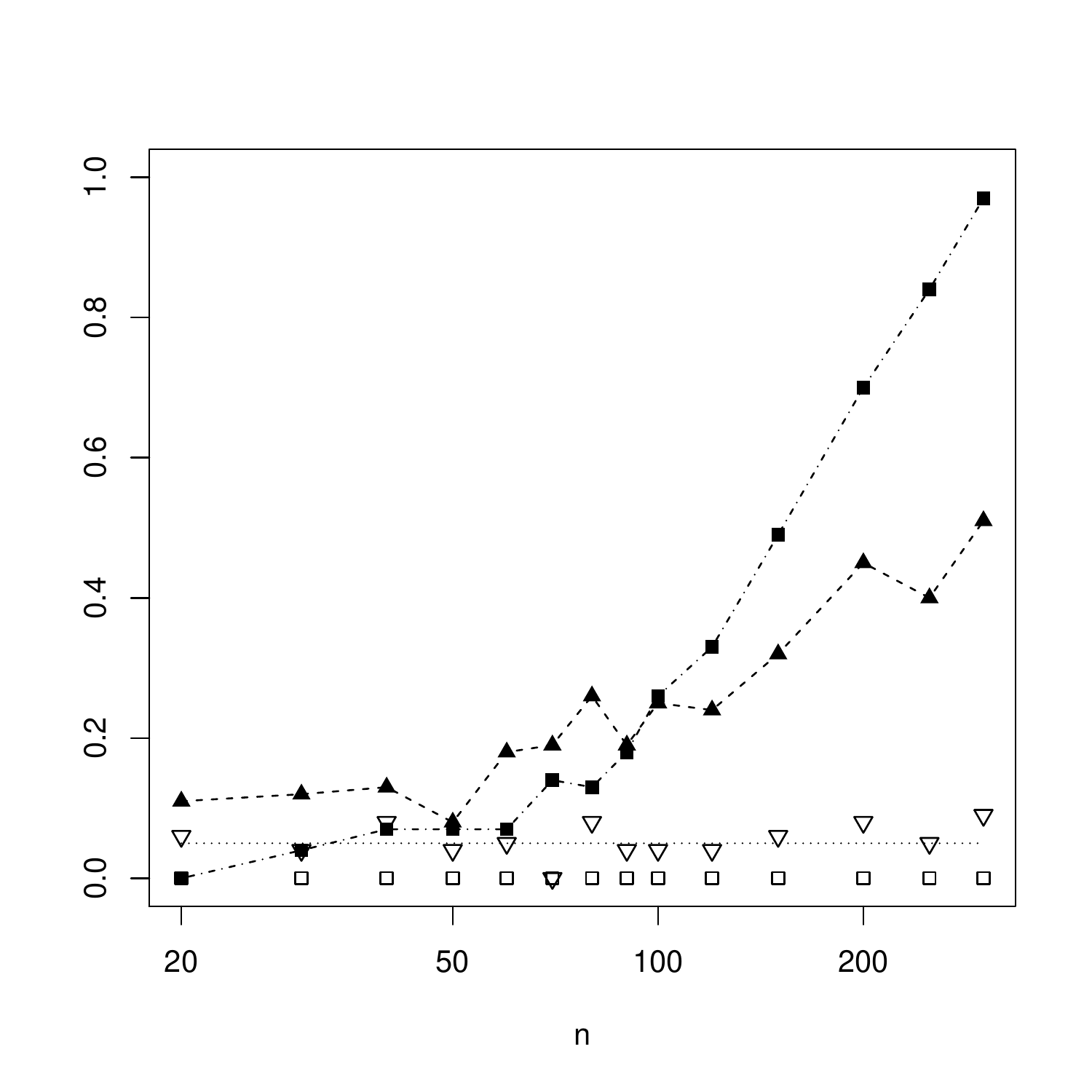}
		\caption{Rejection Proportions for $f_{4}$.} 
		\label{sfig:rp_f_4}
	\end{subfigure}	
	\caption{Rejection proportions for $f_d$. Squares are for the asymmetric variation test, and triangles for the permutation variant. Filled shapes are the estimated power of $H_0^{(2)}$ and unfilled are the empirical sizes. The dashed horizontal line is the significance level $\alpha$.}
	\label{fig:est_power}
\end{figure}
We see that both tests converge to an estimated power $\hat{\PP} ( Reject \mid H_1 ) \rightarrow 1$, with the asymmetric variation test more powerful. Both tests have roughly (up to the variation from 100 simulation) the correct empirical size, though the permutation variation does appear to be slightly liberal for $n \leq 50$.

\subsection{MNIST digit orientation test}

Consider the symmetries of images of the digits 3 and 8. It is clear that rotations of both preserve the classification, however one can argue that only the 8 is invariant to horizontal reflections - a reflected 3 would be an $\mathcal{E}$ (in the same way that a reflected p is a q), and so can be considered oriented in a way that the 8 is not. We call the in digits $\mathcal{O} = \{ 2, 3, 4, 5, 6, 7, 9 \}$ the \textbf{oriented digits} and the others $\{ 0, 1, 8 \}$ the \textbf{non-oriented digits}. Here we test for the symmetry of the probabilistic labels regression function using the asymmetric variation test. 

We subset the MNIST dataset \cite{lecun1998gradient} (Available under a CC BY-SA 3.0 licence) into particular characters, called $\mathcal{D}_{X}(n)$ for $n \in \{0, 1, \dots, 9 \}$. We then split these in half uniformly at random to form sets $\mathcal{D}_{X}^1(n)$ and $\mathcal{D}^R_{X}(n)$. We then apply $b$ (the reflection through the vertical line) from $D_4 = \langle a, b : a^4 = b^2 = 1, ab = ba^{-1} \rangle$ to the elements of $\mathcal{D}_X^R(n)$ and then assign labels $Y_i = 1$ for every for $X_i \in \mathcal{D}_X^{1}(n)$ for all $n$ and $\mathcal{D}_X^R(n)$ if $n$ is non-oriented, and the label $Y_i = 0$ for every other $X_i$. This gives datasets
\begin{equation}
\mathcal{D}(n) = \{ (X_i, 1) : X_i \in \mathcal{D}_X^{1}(n) \} \cup \{ (X_i, \mathbf{1}_{n \not\in \mathcal{O} }) : X_i \in \mathcal{D}_X^{R}(n) \}
\end{equation}
Here the labels $Y_i$ are the probabilities that $X_i$ will be recognised as the digit $n$, and so we are learning the function $f_n : [0,1]^{784} \rightarrow [0,1]$ that assigns such probabilities. We assume that there is no noise, so $p_t = 0$ for all $t$. Lastly we need to pick $V(x,y)$, which we estimate with the reciprocal of the minimal distances between digits in and out of each class. 

We then test each of the datasets $\mathcal{D}(n)$ for symmetry of $D_4$, and for the subgroups $\langle a \rangle$ of rotations and $\langle b \rangle$ of horizontal reflection. These are done with $g^1 \sim U( \{ a, b \} )$, $g^2 \overset{D}{=} a$ and $g^3 \overset{D}{=} b$. The number out of $m = 1000$ samples that have $D_{I(j)J(j)}^g > 0$ is reported in table \ref{tab:mnist}. Since $p_t = 0$, $N_0 > 0$ if and only if the $p$-value is 0. All tests were successful - for all oriented digits all $D_4$ and $\langle b \rangle$ symmetries were rejected whilst all $\langle a \rangle$ symmetries were accepted. Non-oriented digits can never cause a rejection because the responses are all equal. The larger numbers reflect the ease of identifying the asymmetry, at least partly because the estimated bound $V$ is sharper for some of the classes.

\begin{table}[h]
\centering
\caption{Calculated $N_0$ for MNIST digit orientation tests.}
\label{tab:mnist}
\onehalfspacing
\begin{tabular}{c|ccccccc}
		 Test for:    		  	 & 2  & 3  & 4  & 5  & 6   & 7  & 9 \\ \hline \hline
		 $D_4$ 			  		 & 6  & 20 & 16 & 71 & 164 & 54 & 9  \\
		 $\langle a \rangle$ 	 & 0  & 0  & 0  & 0  & 0   & 0  & 0  \\
		 $\langle b \rangle$ 	 & 2  & 9  & 7  & 23 & 49  & 27 & 5  \\
\end{tabular}
\end{table}

\section{Conclusion}

\textbf{Comparison of test powers.} The permutation variant has a key advantage over the standard Asymmetric Variation Test: if the bound $V$ is weak for the true regression function $f$ (e.g., if $V(x,y) = d_\mathcal{X}(x,y)$ but $f$ is $0.5$-Lipschitz) then the power of the asymmetric variation test is artificially reduced. Here we do not make such an assumption, and we effectively estimating the minimal constant $C$ such that $|f(x) - f(y)| \leq C V(x,y)$ with $A_0 \times \tfrac{V(x,y)}{\mathcal{V}(x,y)}$. This means that the computational power used is not just traded for the assumption, we also gain some power for regression functions not on the boundary of $\mathcal{F}$. However, it suffers from the loss of the nearest neighbour sampling; these points are much more likely to tell us about the asymmetry which boosts the power of the standard test, as seen in figure \ref{sfig:rp_f_4}.

\textbf{Testing in the other direction.} The test as presented mainly gives confidence to the rejections - i.e., it is only useful to show that a given regression function is not equivariant. One could hope to test with the hypotheses reversed, where we could then give statistical confidence to the presence of a symmetry. Unfortunately this is problimatic because the class of non-symmetric functions is dense in $L^2(\mathcal{X})$, which makes it impossible to distinguish between a symmetric and non-symmetric function in finite samples. This could be relaxed to test for a hypothesis of the form $H_0 : \| f - S_G f \| \geq \delta$ for some threshold $\delta$, and this would be an interesting question for future work. 

\textbf{Computational Complexity.} These tests are remarkably fast, requiring $O(nm)$ (for the asymmetric variation test) or $O(m^2B)$ (for the permutation variant) computations only. This depends on the dimensions of $\mathcal{X}$ and $\mathcal{Y}$ only through the evaluations of $d_\mathcal{X}$ and $| \cdot |$. These are usually $O(d)$ (e.g., Euclidean norms, Minkowskii distances.). The asymmetric variation test evaluated on MNIST with $n = 6131$, $m = 1000$, and $d = 784$ in only 75 seconds using a 2020 MacBook pro with a 2GhZ quad core i5 with 16GB of RAM. 

\textbf{Limitations.} The main limitations of this test are the assumptions that the data is i.i.d., and that we have knowledge of $\mathcal{V}$ or $V$. The first is minor, and most of the results can be recovered as long as the noise is independent and each $\epsilon_i$ satisfies the $p_t$ bound. The second requires some care, but can be tested on the data as we did with MNIST. This tests ``fails safely'', in the sense that a false positive (i.e. $p_{val} \mid H_0 \leq \alpha$) is not a problem for the practitioner - they can result to standard non-equivariant tools. A false negative (i.e. $p_{val} \mid H_1 \geq \alpha$) may waste time building a model, but without this test that time would be guaranteed to be wasted. 

\textbf{Future work.} There are two ways in which these tests could be extended. The first is for non-linear actions, and the second is to consider the effect of a non $G$-invariant noise distribution. Both of these extensions will require work to rebound the distribution of $|g \star \epsilon_i - \epsilon_j |$, as the bound \ref{eq:like_bound} relies on these assumptions. One could also look to develop a further test that directly estimates the bound $V$, rather than using the permutation approach that does this implicitly.

\newpage
\section*{Broader Impact}

This testing framework is the first way to quantify the presence of invariance and equivariance in datasets. This work will benefit anyone that wishes to include symmetric information in complex machine learning models. It allows people to quickly test for the existence of a symmetry, which should prevent these models being used in cases where they shouldn't. This will give some confidence to their use as well, and provides some direct support when we do see lower test errors. This directly alleviates some of the concern raised in the "Broader impact statement" of \cite{chen2020group}. 

This test does require some care, as with all hypothesis tests. Firstly the practitioner needs to be careful about placing assumptions on $V$, $\mathcal{V}$, and $\epsilon$ (in particular). They need to understand that a high $p$-value does not constitute strict evidence for the symmetry, and evidence from domain specific knowledge is still required to understand whether the symmetry is truely present.

\begin{ack}

This work was supported by the University of Cambridge Harding Distinguished Postgraduate Scholars Programme. This work was also supported by Engineering and Physical Research Council grant EP/T017961. We thank the StatsLab and the rest of the Department of Pure Mathematics and Mathematical Statistics at the University of Cambridge, and the Cantab Capital Institute for the Mathematics of Information. 

\end{ack}

\bibliography{bibtex.bib}{}

\begin{thebibliography}{1}

\bibitem{adamsMA3K0notes}
{\sc Adams, S.}
\newblock High-dimensional probability lecture notes, May 2020.

\bibitem{bietti2021sample}
{\sc Bietti, A., Venturi, L., and Bruna, J.}
\newblock On the sample complexity of learning with geometric stability.
\newblock {\em arXiv preprint arXiv:2106.07148\/} (2021).

\bibitem{bronstein2021geometric}
{\sc Bronstein, M.~M., Bruna, J., Cohen, T., and Veličković, P.}
\newblock Geometric deep learning: Grids, groups, graphs, geodesics, and gauges, 2021.

\bibitem{chen2020group}
{\sc Chen, S., Dobriban, E., and Lee, J.}
\newblock A group-theoretic framework for data augmentation.
\newblock {\em Advances in Neural Information Processing Systems 33\/} (2020), 21321--21333.

\bibitem{elesedy2021provably}
{\sc Elesedy, B.}
\newblock Provably strict generalisation benefit for invariance in kernel methods.
\newblock {\em arXiv preprint arXiv:2106.02346\/} (2021).

\bibitem{good2005permutation}
{\sc Good, P.~I.}
\newblock {\em Permutation, parametric and bootstrap tests of hypotheses: a practical guide to resampling methods for testing hypotheses}.
\newblock 2005.

\bibitem{jiang2017atomic}
{\sc Jiang, W., and Tang, L.}
\newblock Atomic cryo-em structures of viruses.
\newblock {\em Current opinion in structural biology 46\/} (2017), 122--129.

\bibitem{lecun1998gradient}
{\sc LeCun, Y., Bottou, L., Bengio, Y., and Haffner, P.}
\newblock Gradient-based learning applied to document recognition.
\newblock {\em Proceedings of the IEEE 86}, 11 (1998), 2278--2324.

\bibitem{lyle2020benefits}
{\sc Lyle, C., van~der Wilk, M., Kwiatkowska, M., Gal, Y., and Bloem-Reddy, B.}
\newblock On the benefits of invariance in neural networks.
\newblock {\em arXiv preprint arXiv:2005.00178\/} (2020).

\end{thebibliography}
\bibliographystyle{acm}

\newpage
\appendix

\section{Appendix: Proof of Consistency for the Asymmetric Variation Test}
\label{app:cons_proof}

This appendix covers the proof of Proposition \ref{prop:cons} in section \ref{ssec:consistency}. We first prove a number of supporting lemmas.

\begin{Lem}
If the support of $\mu_X$ is dense in $\mathcal{X}$ then $d_\mathcal{X} ( g \cdot X_{I(1)}, X_{J(1)} ) \rightarrow 0$ in probability. 
\end{Lem}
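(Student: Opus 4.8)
The plan is to reduce the claim to a routine quantitative nearest-neighbour consistency estimate, the only real wrinkle being that the query point $g\cdot X_{I(1)}$ is itself built from the sample; this I would handle by conditioning. Throughout, convergence is understood as $n\to\infty$, with $g=g_1\sim\mu_g$ and $I(1)$ uniform on $\{1,\dots,n\}$ (so that marginally $X_{I(1)}\sim\mu_X$), and $J(1)$ the index minimising $d_\mathcal{X}(g\cdot X_{I(1)},X_j)$ over all $j$.

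Fix $\epsilon>0$. The first ingredient I would record is that every ball has positive $\mu_X$-mass: since $\overline{\mathrm{supp}(\mu_X)}=\mathcal{X}$, for any $z\in\mathcal{X}$ there is $w\in\mathrm{supp}(\mu_X)$ with $d_\mathcal{X}(z,w)<\epsilon/2$, whence $B(w,\epsilon/2)\subseteq B(z,\epsilon)$ and so $p_\epsilon(z):=\mu_X(B(z,\epsilon))\geq\mu_X(B(w,\epsilon/2))>0$. In particular $p_\epsilon(g\cdot X_{I(1)})>0$ almost surely, because $g\cdot X_{I(1)}\in\mathcal{X}$. Next I would condition on $(g,X_{I(1)})=(\gamma,x)$: the remaining $n-1$ sample points $\{X_j: j\neq I(1)\}$ are i.i.d.\ $\mu_X$ and independent of $(\gamma,x)$, so the chance that none of them falls in $B(\gamma\cdot x,\epsilon)$ is $(1-p_\epsilon(\gamma\cdot x))^{n-1}$. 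Since the minimum over all $j$ is at most the minimum over $j\neq I(1)$, the event $\{d_\mathcal{X}(g\cdot X_{I(1)},X_{J(1)})>\epsilon\}$ is contained in the event that no $X_j$ with $j\neq I(1)$ lies in $B(g\cdot X_{I(1)},\epsilon)$, giving
\begin{equation*}
\PP\big(d_\mathcal{X}(g\cdot X_{I(1)},X_{J(1)})>\epsilon\big)\;\leq\;\EE\big[(1-p_\epsilon(g\cdot X_{I(1)}))^{n-1}\big],
\end{equation*}
the expectation being over independent $g\sim\mu_g$ and $X_{I(1)}\sim\mu_X$.

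Finally I would let $n\to\infty$: the integrand is bounded by $1$ and tends to $0$ pointwise almost surely by the positivity just established, so dominated convergence yields $\PP(d_\mathcal{X}(g\cdot X_{I(1)},X_{J(1)})>\epsilon)\to0$; as $\epsilon>0$ was arbitrary, this is precisely $d_\mathcal{X}(g\cdot X_{I(1)},X_{J(1)})\overset{p}{\rightarrow}0$. The only points that need a word of care are the measurability of $(\gamma,x)\mapsto p_\epsilon(\gamma\cdot x)$ (which follows from joint measurability of the action together with lower semicontinuity of $z\mapsto\mu_X(B(z,\epsilon))$ for open balls) and the bookkeeping that discarding the single index $I(1)$ from the i.i.d.\ pool does not affect the limit. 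Neither is a genuine obstacle — indeed there is no hard step here, the argument being just a quantitative form of the standard fact that empirical samples become dense in the support.
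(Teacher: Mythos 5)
Your proof is correct and follows essentially the same route as the paper's: condition on $(g, X_{I(1)})$, bound the probability that no sample point lands in the ball $B(g\cdot X_{I(1)},\epsilon)$ by $(1-\mu_X(B))^{n-1}$, and use density of the support to make the ball's mass positive. You are in fact more careful than the paper (excluding the index $I(1)$ from the i.i.d.\ pool, invoking dominated convergence to pass to the unconditional statement, and noting the measurability point), but the underlying idea is identical.
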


\begin{proof}
	The probability that $d_\mathcal{X} ( g \cdot X_{I(1)}, X_{J(1)} ) \geq \eta$ given $I, g$ is equivalent to a binom variable with $n$ trials and probability of success $p = \mu_X( \mathcal{X} \setminus B (g \cdot X_{I(1)}, \eta)$ being $0$. Since the support of $\mu_X$ is dense in $\mathcal{X}$, $\mu_X( B (g \cdot X_{I(1)}, \eta) ) > 0$, and so   . As $n \rightarrow \infty$, this clearly goes to $0$ for all $\eta$, and for all $I, g$. 
\end{proof}

\begin{Lem}
	\label{lem:sym_props}
	If $X$ and $Y$ are independent real random variables, and $Y$ is symmetrically distributed around $0$ (so $F_Y(t) = 1 - F_Y(-t)$ for all $t$) and admits a density $f_Y$ with respect to Lebesgue measure that is decreasing in $|y|$. Then $\PP( |X + Y| \geq t) > \PP( |Y| \geq t )$. 
\end{Lem}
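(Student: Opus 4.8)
The plan is to condition on $X$ and reduce the two-dimensional statement to a one-dimensional comparison for $Y$ alone. Write $h(x) := \PP(|x+Y| \geq t)$, so that $\PP(|X+Y| \geq t) = \EE\bigl[h(X)\bigr]$ by independence, while $\PP(|Y| \geq t) = h(0)$. It therefore suffices to show that $h(x) \geq h(0)$ for every $x \in \RR$, with strict inequality whenever $x \neq 0$ (or at least on a set of positive probability under the law of $X$); integrating then gives the claim. First I would note that $h$ is even: since $Y \overset{D}{=} -Y$ we have $|-x + Y| = |x - Y| \overset{D}{=} |x + Y|$, so $h(x) = h(|x|)$, and it is enough to treat $x > 0$.

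For $x>0$, using that $Y$ has a density (no atoms) I would write $h(x) = 1 - F_Y(t-x) + F_Y(-t-x)$, hence
\[
h(x) - h(0) = \bigl[F_Y(t) - F_Y(t-x)\bigr] - \bigl[F_Y(-t) - F_Y(-t-x)\bigr] = \int_{t-x}^{t} f_Y(s)\dd s - \int_{-t-x}^{-t} f_Y(s)\dd s .
\]
Applying $s \mapsto -s$ and the symmetry $f_Y(s) = f_Y(-s)$ to the second integral turns it into $\int_t^{t+x} f_Y(s)\dd s$, and then the substitutions $s = t-r$ and $s = t+r$ give the clean formula
\[
h(x) - h(0) = \int_0^x \bigl( f_Y(t-r) - f_Y(t+r) \bigr)\dd r .
\]
For each $r \in (0,x]$ the arguments $t-r$ and $t+r$ are symmetric about $t>0$ and satisfy $|t+r| = t+r > \max(t,r) \geq |t-r|$, so the hypothesis that $f_Y$ is decreasing in $|y|$ forces $f_Y(t-r) \geq f_Y(t+r)$. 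Thus the integrand is nonnegative, giving $h(x) \geq h(0)$ for all $x$; if $f_Y$ is strictly decreasing in $|y|$ the integrand is strictly positive on $(0,x]$, so $h(x) > h(0)$ for every $x \neq 0$, and integrating over the (non-degenerate) law of $X$ yields $\EE[h(X)] > h(0)$.

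The hard part — really the only delicate point — is the \emph{strict} inequality. If $f_Y$ were merely non-increasing in $|y|$ (e.g. $Y$ uniform on a symmetric interval much wider than $t$), then $h(x) = h(0)$ for all small $x$, and a degenerate or tightly concentrated $X$ would give only equality; so the strict inequality genuinely relies on $f_Y$ being strictly decreasing in $|y|$ together with $\PP(X=0)<1$. If one only wants to use "non-increasing", the argument above still delivers $h(x)\ge h(0)$ everywhere, and one recovers strictness by exhibiting positive $X$-mass in the set $\{x : f_Y \text{ is non-constant on } [\,|t-|x||,\; t+|x|\,]\}$, on which the displayed integral is strictly positive. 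Everything else is routine CDF bookkeeping and the two changes of variable above.
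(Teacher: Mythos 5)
Your proof is correct and is essentially the paper's argument in integral form: both condition on $X$ and compare $F_Y(t-x)+F_Y(t+x)$ with $2F_Y(t)$ using that $|t+r|\geq|t-r|$ together with the monotonicity of $f_Y$ in $|y|$ (the paper differentiates $\phi_t(x)=F_Y(t-x)+F_Y(t+x)$ where you integrate $f_Y(t-r)-f_Y(t+r)$). Your closing remark on strictness is well taken — the paper's own final display only establishes $\geq$ despite the lemma asserting $>$, so your explicit observation that strict inequality needs $f_Y$ strictly decreasing on the relevant range and $\PP(X=0)<1$ is a genuine improvement in care rather than a divergence in method.
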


\begin{proof}
	Let $F_Y$ be the distribution function of $Y$. Then $\PP( X + Y \geq t) = \EE_X (\PP( Y \geq t - X \mid X) ) = 1 - \EE_X( F_Y(t- X ) )$. Thus 
	\begin{align}
		\PP( |X + Y |\geq t ) &= \PP( X + Y \geq t) + \PP( X + Y \leq -t ) \\
			&= 2 - \EE_X( F_Y(t - X) + F_Y(t + X) )
	\end{align}
	Let $\phi_t(x) = F_Y(t - x) + F_Y(t + x)$. Since $Y$ is absolutely continuous with density function $f_Y$, we can see that $\phi_t$ has a critical point at $x = 0$, moreover,	if $x > 0$ then
	\begin{equation}
		\phi'_t(x) = f_Y(t + x) - f_Y(t - x) = f_Y(t + x ) - f_Y(-t + x ) <  f_Y(t + x ) - f_Y(-t - x) = 0  
	\end{equation}
	and similarly $\phi'_t(x) > 0$ if $x < 0$. Thus $x = 0$ is a maximum and so we have
	\begin{equation}
		\PP( |X + Y |\geq t ) = 2 - \EE_X( F_Y(t - X) + F_Y(t + X) ) \geq 2 - \EE_X( F_Y(t) + F_Y(t) ) = \PP( |Y| \geq t ).
	\end{equation}
	as required.
\end{proof}

\begin{Lem}
	\label{lem:binom_likelihood}
	If 	$X \sim Binom(n, p)$ and $Y \sim Binom(n, q)$ with $p < q$ , then $F_X( Y) \overset{p}{\rightarrow} 1$
\end{Lem}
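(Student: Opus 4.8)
The plan is to deduce this from the weak law of large numbers applied to each binomial sequence, together with monotonicity of a cumulative distribution function; no independence between $X$ and $Y$ is needed. Write $X = X_n \sim \mathrm{Binom}(n,p)$ and $Y = Y_n \sim \mathrm{Binom}(n,q)$ for the implicit sequences, so that $X_n/n \overset{p}{\rightarrow} p$ and $Y_n/n \overset{p}{\rightarrow} q$, and fix constants with slack: let $r = (p+q)/2$ and $r' = (r+q)/2$, so that $p < r < r' < q$. I will pin $F_{X_n}$ down at a deterministic threshold growing like $rn$, show $Y_n$ exceeds that threshold with high probability, and then chain the two facts through monotonicity.

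First I would control $F_{X_n}$ at $a_n := \lceil r n \rceil$. Since $F_{X_n}$ is the CDF of $X_n$, we have $F_{X_n}(a_n) = \PP(X_n \le a_n) \ge \PP(X_n / n \le r)$, and because $r > p$ the weak law gives $\PP(X_n/n \le r) \to 1$. Hence the \emph{deterministic} sequence $F_{X_n}(a_n)$ converges to $1$.

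Second I would show $\PP(Y_n \ge a_n) \to 1$. Because $a_n \le rn + 1$, for every $n \ge 1/(r' - r)$ we have $a_n \le r' n$, so $\PP(Y_n \ge a_n) \ge \PP(Y_n \ge r' n) = \PP(Y_n/n \ge r')$, and this tends to $1$ since $r' < q$.

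Finally I would combine the two. On the event $\{Y_n \ge a_n\}$, monotonicity of $F_{X_n}$ gives $F_{X_n}(Y_n) \ge F_{X_n}(a_n)$. So, given $\varepsilon > 0$, for all $n$ large enough that $F_{X_n}(a_n) > 1 - \varepsilon$ we get
\[
\PP\big( F_{X_n}(Y_n) > 1 - \varepsilon \big) \;\ge\; \PP\big( F_{X_n}(Y_n) \ge F_{X_n}(a_n) \big) \;\ge\; \PP(Y_n \ge a_n) \;\longrightarrow\; 1 .
\]
Since $F_{X_n}(Y_n) \le 1$ always, this is precisely $F_{X_n}(Y_n) \overset{p}{\rightarrow} 1$. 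There is no substantive obstacle; the only point requiring care is that the same deterministic threshold $a_n$ must simultaneously sit above $X_n$ and below $Y_n$ with high probability, which is exactly why the two strictly-between constants $r$ and $r'$ (with room to absorb the $\lceil \cdot \rceil$) are introduced.
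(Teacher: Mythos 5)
Your proof is correct, and it takes a genuinely different and more elementary route than the paper's. The paper standardises both variables and invokes the de Moivre--Laplace central limit theorem: $\tilde{X}_n = (X_n - np)/\sqrt{np(1-p)}$ converges to $N(0,1)$ while the corresponding $\tilde{Y}_n$ drifts to $+\infty$ because of the $\sqrt{n}(q-p)$ term, whence $F_{\tilde{X}_n}(\tilde{Y}_n) \rightarrow 1$. You instead use only the weak law of large numbers plus monotonicity of the CDF, inserting a deterministic threshold $a_n = \lceil rn \rceil$ with $p < r < r' < q$ that separates the two concentration scales. Your version buys two things: it avoids the CLT entirely, and it handles cleanly a point the paper leaves implicit --- namely that the distribution function $F_{X_n}$ itself changes with $n$, so passing from ``$\tilde{Y}_n \overset{p}{\rightarrow} \infty$'' to ``$F_{\tilde{X}_n}(\tilde{Y}_n) \rightarrow 1$'' really requires uniform control of the family of CDFs (or, as you do, evaluation at a deterministic sequence where the limit is known). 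Your observation that no independence of $X$ and $Y$ is needed is also correct, since the argument only uses the marginal law of $Y_n$ and the deterministic function $F_{X_n}$. The paper's approach would give sharper quantitative information about the rate at which $F_X(Y)$ approaches $1$ if one pushed it further, but for the qualitative statement of the lemma your argument is complete and simpler.
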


\begin{proof}
	First note that $F_X(Y) = F_{ \tilde{X} } ( \tfrac{Y - n p}{ \sqrt{n p (1 - p) } } )$ where $\tilde{X} = \tfrac{X - n p}{ \sqrt{n p (1 - p) } }$, which has $\tilde{X} \overset{D}{\rightarrow} N(0, 1)$ by the de Moivere - Laplace theorem (DLT). But also
	\begin{equation}
		\tilde{Y} = \frac{Y - n p}{ \sqrt{n p (1 - p) } } = \left( \frac{Y - n p}{ \sqrt{n q (1 - q) } } + \sqrt{n} \frac{q -  p}{ \sqrt{ q (1 - q) } } \right) \frac{\sqrt{q (1 - q)} }{ \sqrt{ p (1 - p) } }
	\end{equation} 
	Again by DLT, the first term converges to $N(0, \tfrac{q (1 - q) }{p (1 - p} )$, but the second term diverges to $+\infty$ (as $q > p$). Thus $\tilde{Y} \overset{p}{\rightarrow} \infty$ and so clearly $F_{\tilde{X} }( \tilde{Y} ) \rightarrow 1$, as required. 
\end{proof}

We can now return to the proof of Proposition \ref{prop:cons}. 

\begin{proof}[Proof of Proposition \ref{prop:cons}]
	
Since the support of $\mu_X$ is dense in $\mathcal{X}$, Lemma 4 gives $d_\mathcal{X} ( g \cdot X_{I(j)}, X_{J(j)} ) \overset{p}{\rightarrow} 0$. Now consider
\begin{align}
D_{I(j)J(j)}^{g_j} &= | g \star Y_{I(j)} - Y_{J(j)}| - V( g \cdot X_{I(j)}, X_{J(j)} ) \\
	&= | g \star f(X_{I(j)} ) + g \star \epsilon_{I(j)} - f(X_{J(j)}) - \epsilon_{J(j)} | - V( g \cdot X_{I(j)}, X_{J(j)} )
\end{align}
We have that $V( g \cdot X_{I(j)}, X_{J(j)} ) \rightarrow 0$ as the distance goes to $0$, and also 	$f(X_{J(j)}) - f( g \cdot X_{I(j)}) \overset{p}{\rightarrow} 0$. Thus with $X$ as a variable with the same law as each $X_i$ and $g$ with the same law as each $g_j$,
\begin{equation}
	D_{I(j)J(j)}^{g_j} \overset{D}{\rightarrow} | g \star (f(X) + \epsilon_\ell) - f(g \cdot X) - \epsilon_k  | = | \phi(g, X) + \eta |	
\end{equation}
where $\phi(g, X) = g \star f(X) - f ( g \cdot X )$ and $\eta = g \star \epsilon_l - \epsilon_k$. Definitionally, this means  
\begin{equation}
\PP( D_{I(j) J(j) }^{g_j} \geq t ) \rightarrow \PP( | \phi(g, X) + \eta | \geq t ) \end{equation}
Under $H_0$, $\phi(g,X) = 0$ almost surely, so this probability is given by $P_t^0 = \PP( |\eta| \geq t) = p_t$. Under $H_1$, $\phi$ must take non zero values with some positive probability. Thus we can use Lemma \ref{lem:sym_props} to say that 
\begin{equation}
	P_t^1 = \PP(  | \phi(g, X) + \eta | \geq t \mid H_1 ) > P_t^0
\end{equation}
Let $N$ be large enough that $\PP( D_{I(j) J(j) }^{g_j} \geq t ) > (P^1_t - P_0^t) / 2 > P_t^0$ for all $n > N$.  Now consider that $N_t^g \mid H_1 \sim \mathrm{Binom}( m, \PP( D_{I(j) J(j) }^{g_j} \geq t \mid H_1 )  )$, which is stochastically bounded from below by $A \sim \mathrm{Binom} (m, (P_t^1 - P_t^0)/2) $ for $n > N$. This gives, using lemma \ref{lem:binom_likelihood}, that
\begin{equation}
	p_{val} \mid H_1 = 1 - F_{ N_t^g \mid H_0 }( N_t^g \mid H_1 ) \leq 1 - F_{N_t^g \mid H_0 } ( A ) \overset{p}{\rightarrow} 0
\end{equation}
as required. 
\end{proof}

\newpage 
\section{Appendix: Simulations}
\label{app:sims}

\subsection{Effect of changing $V$ for the asymmetric variation test}
Simulations of the asymmetric variation test for varying tightness of the bound of $V$, shown in figure \ref{fig:effect_of_V}. The regression function $f_2$ is in all of the Lipschitz classes $\mathcal{F}(e^{-1}, 1) \subseteq \mathcal{F}(0.5, 1) \subseteq \mathcal{F}(1, 1) \subseteq \mathcal{F}(2, 1)$, so we can use the function $V(x,y) = L \| x - y \|_2$ from any of these classes. The simulations in figure \ref{fig:effect_of_V} (under the same set-up as above) show as expected that the tighter the bound the more powerful the test, though even weak bounds still converge an estimated power of $1$. 

The regression function is not in the classes $\mathcal{F}(e^{-3}, 1) \subseteq \mathcal{F}(e^{-2}, 1) \subseteq \mathcal{F}(e^{-1.2}, 1)$. Thus, using these values of $L$ are invalid. In figure \ref{sfig:emp_size_V}, we show the empirical size (i.e., rejection proportions under $H_0$) for these as well as the valid $L = e^{-1}$. We see that the size increases as $L$ decreases, but that the test is mildly robust to misspecification. In this case the $\mu_X$ volume of the region that breaks this bound is small. 
\begin{figure}[h]
	\centering
		\begin{subfigure}{.45\textwidth}
		\includegraphics[scale = 0.42]{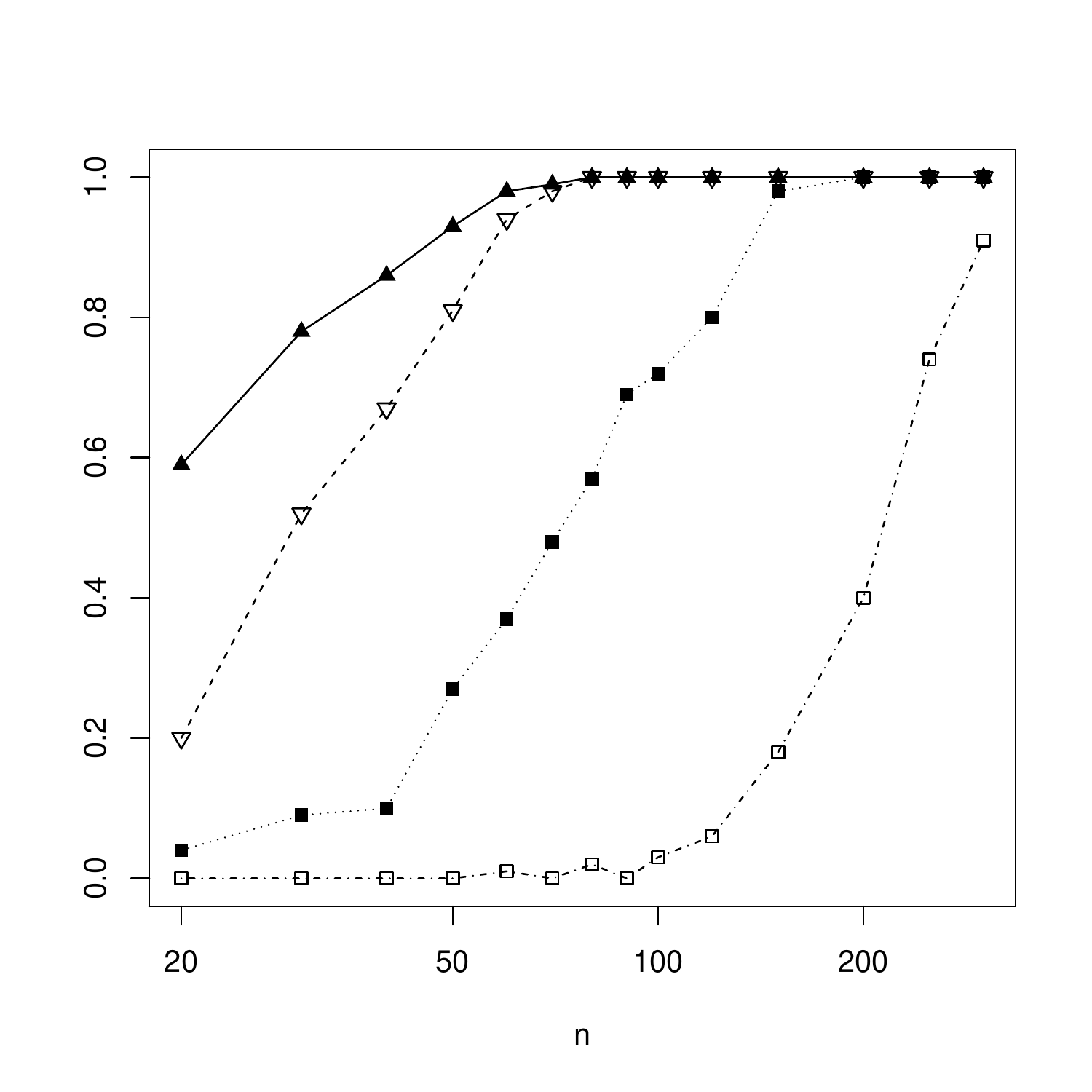}
		\caption{Estimated power of $H_1: f_2$ is $(G, \cdot)$-invariant.} 
	\end{subfigure}
	\begin{subfigure}{.45\textwidth}
		\includegraphics[scale=0.42]{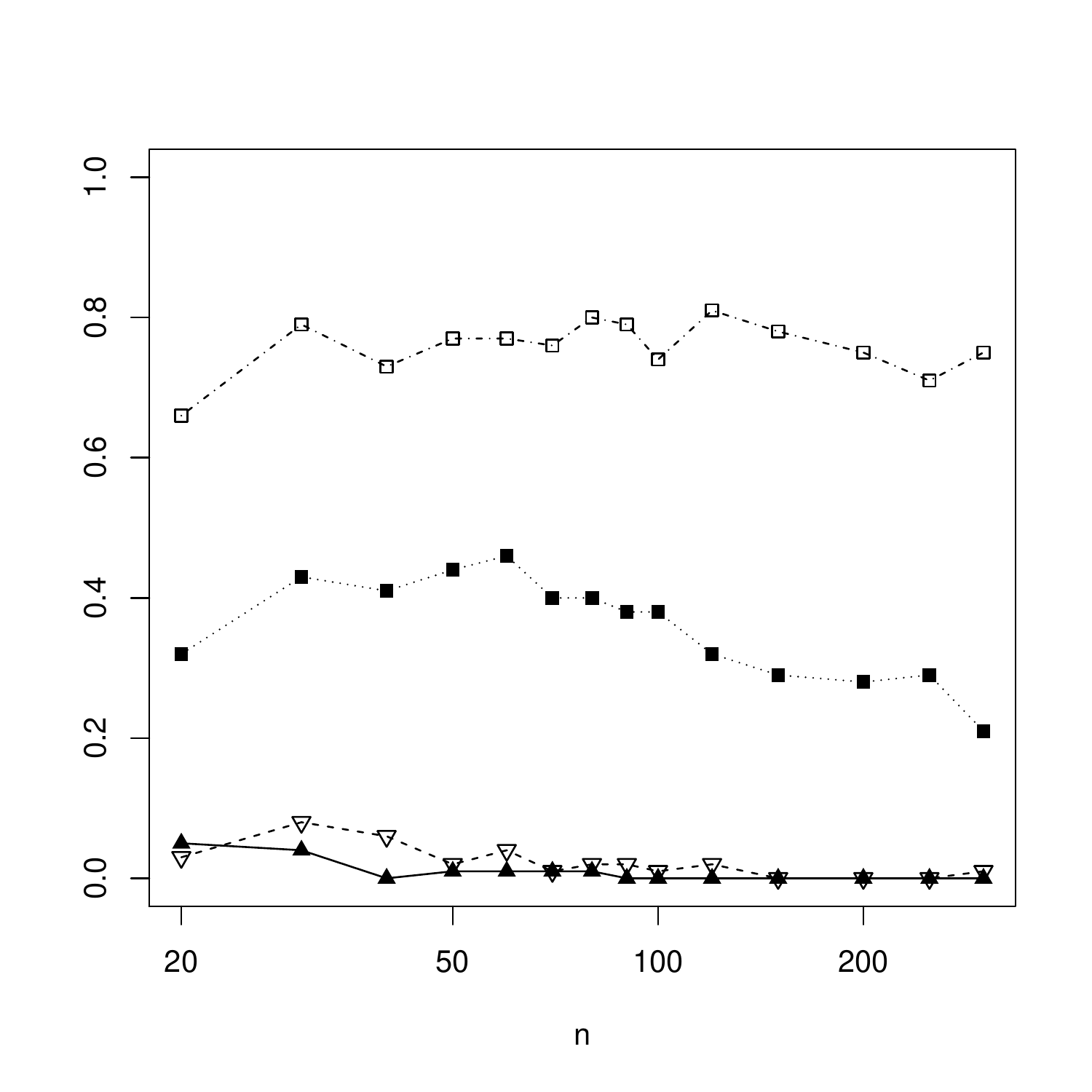}
		\caption{ Empirical Size for $H_0: f_2$ is $(G, \star)$-invariant. }
		\label{sfig:emp_size_V}
	\end{subfigure}
		\caption{Estimated power graphs of the Asymmetric Variation test for different levels of $V$. The line for $L = e^{-1}$ is is solid, for $L = 0.5$ dashed, for $L = 1$ dotted, and for $L = 2$ dot dashed. Empirical size graphs have a solid line for $L = e^{-1}$, dashed for $L = e^{-1.2}$, dotted for $L = e^{-2}$, and dot dashed for $L = e^{-3}$. }
	\label{fig:effect_of_V}
\end{figure}

\subsection{Effect of changing $q$ for the permutation variant}
\label{sapp:effect_q}
We show rejection proportions for the permutation variant for varying levels of $q \in \{ 0.5, 0.75, 0.9, 0.95, 1\}$. Tests with $q = 1$ are clearly more powerful, but are too liberal because of the bias of the finite sample of $\{ g \cdot X \}$. Tests with $q \leq 0.9$ are weaker than for $q = 0.95$, but without any significant reduction in the empirical size. 

\begin{figure}[h]
	\centering
	\begin{subfigure}{.45\textwidth}
		\includegraphics[scale=0.42]{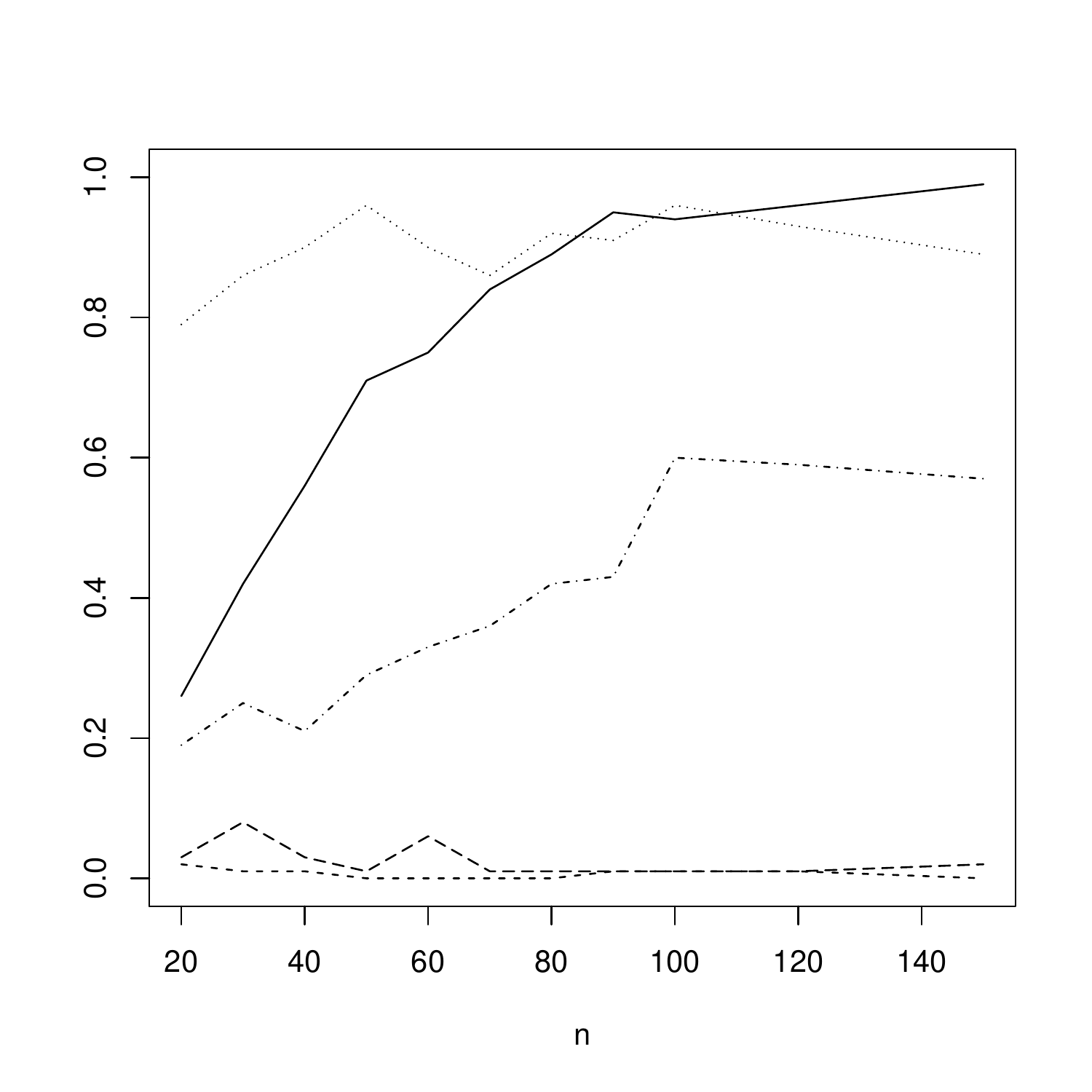}
		\caption{Estimated power of $H_0: f_2$ is $(G, \cdot)$-invariant.} 
	\end{subfigure}
	\begin{subfigure}{.45\textwidth}
		\includegraphics[scale=0.42]{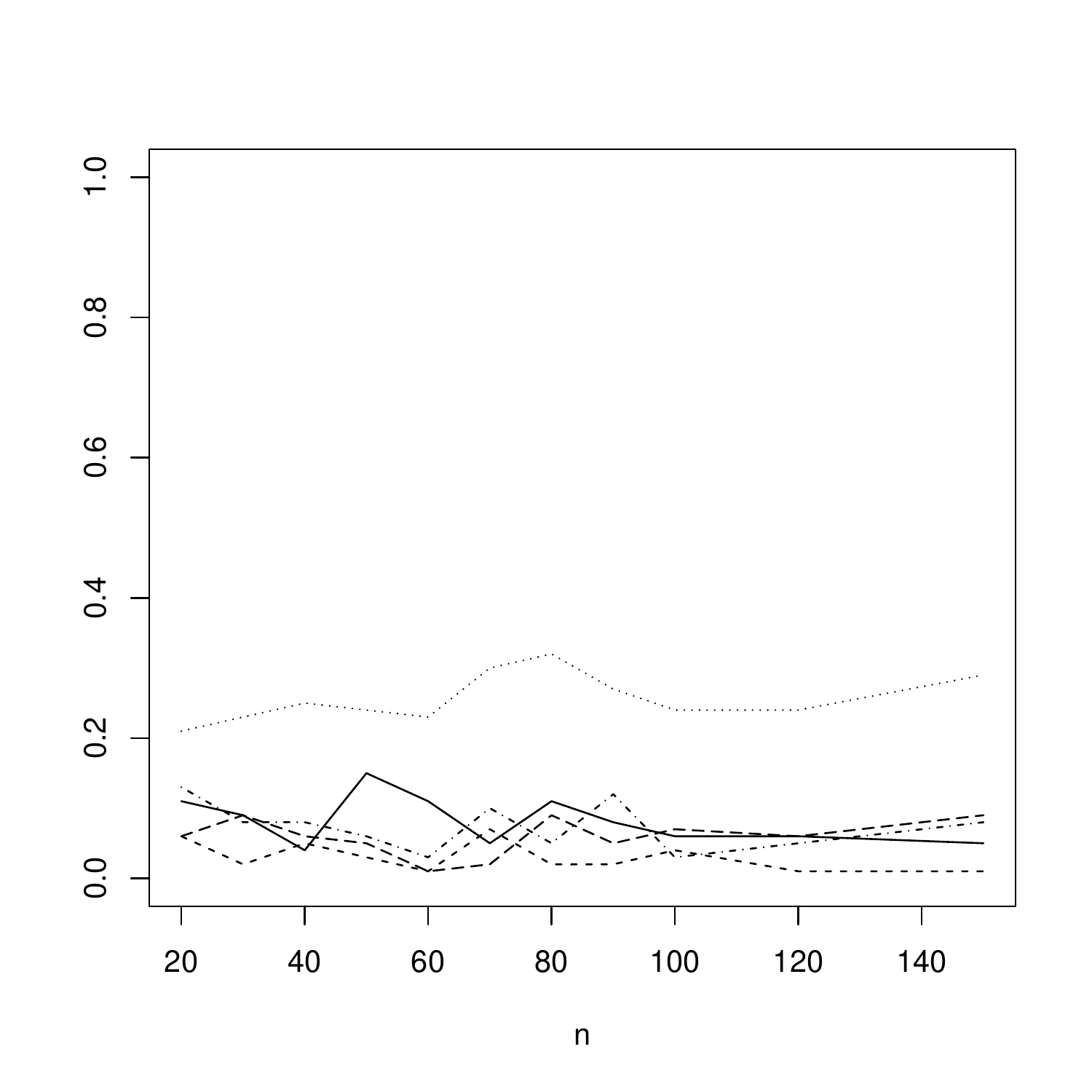}
		\caption{Empirical Size for $H_0: f_2$ is $(G, \star)$-invariant.}
	\end{subfigure}
	
		\caption{Rejection proportion graphs of the Permutation variant under difference choices of $q$. For $q = 0.5$, we have a dashed line, for $q = 0.75$ a long dashed line, for $q = 0.9$ a dot dashed line, for $q = 0.95$ a solid line, and for $q = 1$ a dotted line.  }
	\label{fig:effect_of_q}
\end{figure}

\subsection{Effect of asymmetry in figure \ref{fig:non_inv_error}} 
\label{sapp:fig_2_code}

Suppose that $X_i \overset{iid}{\sim} U( B_{\mathbb{R}^2}(0,4) )$ and $\epsilon_i \overset{iid}{\sim} U(-0.1,0.1)$. Set $Y_i = \exp( - \| X_i \|_2) + \epsilon_i$. Let $\hat{f}(x, \mathcal{D}) = \frac{ \sum_{i = 1}^n K_h(x,X_i) Y_i}{\sum_{i = 1}^n K_h(x, X_i ) }$ be a local constant estimator of $f(x) = \EE( Y_i \mid X_i = x) = e^{-\| x\|_2}$, with a rectangular kernel $K_h(x, X_i) = I( x \in B(X_i, h) )$ of bandwidth $h$. 

Let $G = \{ I, R_{\pi/2}, R_{\pi}, R_{3\pi / 2} \}$ act by rotations on $\RR^2$, which $f(x)$ is clearly invariant under. Let $K_G(x, X_i) = \tfrac{1}{4} \sum_{g \in G} K_h( g \cdot x, X_i)$ we the symmetrised kernel, and $\hat{f}_G$ the estimator using this symmetrised kernel. 

Suppose that we have an additional independent and identically distributed test dataset, also of size $n$. Figure \ref{fig:non_inv_error} tracks the mean squared error on the test dataset for 500 simulations at various values of $n$, plotting a black horizontal line for the error of $\hat{f}$ and a red line for $\hat{f}_G$.

\newpage

\subsection{Various other simulations}

Using the same notation as section \ref{ssec:sims}, we present several more simulations for various regression functions and choices of $n$ and $m$. These are from preliminary simulations and the code is not in the supplementary material.

\begin{eg}
Let $\mathcal{X} = \RR^2$, let $X \sim U( B(0,4) )$, and let $f_{sim}(x) = \exp( - \| x \|_2 )$ and $f_2(x) = \exp( - |x_1| )$. Let $Y^i \mid X \sim f_i(X) + \epsilon$ where $\epsilon \sim N(0, \sigma^2)$. Under the assumption that $f_i$ is $L$-Lipschitz, and with knowledge of $\sigma^2$, we obtain the following proportions of rejections of $H_0 : \delta_f = 0$ at significance level $\alpha = 0.05$ for 100 simulations:

 \begin{table}[h!]
 	\centering
 	\begin{tabular}{cc|cccccc|cccccc}
 	  & &\multicolumn{6}{c}{$n$ with $H_0: f = f_1$}	 & \multicolumn{6}{|c}{$n$ with $H_1: f = f_2$}	 \\
 	  & & 100 & 200 & 300 & 400 & 500 & 1000 & 100 & 200 & 300 & 400 & 500 & 1000 \\ \hline \hline
 	 \multirow{6}{1em}{$m$} & 100 & 0 & 0 & 0 & 0 & 0 & 0 & 0.08 & 0.45 & 0.69 & 0.70 & 0.87 & 0.97 \\ 
     & 200 & 0 & 0 & 0 & 0 & 0 & 0 & 0.17 & 0.61 & 0.81 & 0.96 & 0.97 & 1 \\
     & 300 & 0 & 0 & 0 & 0 & 0 & 0 & 0.24 & 0.77 & 0.95 & 0.99 & 1 & 1 \\
     & 400 & 0 & 0 & 0 & 0 & 0 & 0 & 0.24 & 0.82 & 0.98 & 1 & 1 & 1 \\
     & 500 & 0 & 0 & 0 & 0 & 0 & 0 & 0.31 & 0.89 & 1 & 1 & 1 & 1 \\
     & 1000 & 0 & 0 & 0 & 0 & 0 & 0 &0.41 & 0.93 & 1 & 1 & 1 & 1
 	\end{tabular}
 	\caption{Rejection proportions for $L = 1$ and $\sigma = 0.05$}
 \end{table}
 
 \begin{table}[h!]
 	\centering
 	\begin{tabular}{cc|cccccc|cccccc}
 	  & &\multicolumn{6}{c}{$n$ with $H_0: f = f_1$}	 & \multicolumn{6}{|c}{$n$ with $H_1: f = f_2$}	 \\
 	  & & 100 & 200 & 300 & 400 & 500 & 1000 & 100 & 200 & 300 & 400 & 500 & 1000 \\ \hline \hline
 	 \multirow{6}{1em}{$m$} & 100 & 0 & 0& 0& 0.01& 0.01& 0& 0.01& 0& 0.01& 0.01&  0.02&  0.02 \\
							& 200 & 0.01& 0.02& 0& 0.01& 0& 0.02& 0.01& 0& 0.01& 0.04&  0.02&  0.04 \\
							& 300 & 0.01& 0.01& 0.01& 0.02& 0.01& 0.01& 0.01& 0.01& 0.03& 0.01&  0.04&  0.06 \\
							& 400 & 0.04& 0& 0& 0.01& 0& 0.01& 0.02& 0.03& 0& 0.03&  0.04&  0.04 \\
							& 500 & 0& 0& 0.02& 0.03& 0.03& 0.02& 0.03& 0.04& 0.01& 0&  0.02&  0.08 \\
							&1000 & 0.01& 0.03& 0.04& 0.03& 0.03& 0.03& 0.02& 0.01& 0.02& 0.06&  0.08&  0.09 \\
 	\end{tabular}
 	\caption{Rejection proportions for $L = 1$ and $\sigma = 1$}
 \end{table} 
 
 \begin{table}[h!]
 	\centering
 	\begin{tabular}{cc|cccccc|cccccc}
 	  & &\multicolumn{6}{c}{$n$ with $H_0: f = f_1$}	 & \multicolumn{6}{|c}{$n$ with $H_1: f = f_2$}	 \\
 	  & & 100 & 200 & 300 & 400 & 500 & 1000 & 100 & 200 & 300 & 400 & 500 & 1000 \\ \hline \hline
 	 \multirow{6}{1em}{$m$} & 100 & 0   & 0 &   0 &   0  &  0 &   0 &   0 &0.01 &0.02 & 0.11 & 0.12 & 0.49 \\
& 200&    0  &  0   & 0 &   0&    0&    0&    0& 0 &0.04&  0.10&  0.24&  0.79 \\
& 300&   0  &  0   & 0   & 0  &  0  &  0  &  0 &0 &0.02  &0.20 & 0.31 & 0.96 \\
& 400&   0  &  0   & 0   & 0   & 0   & 0 &   0 &0 &0.01  &0.19 & 0.32  &0.98 \\
& 500&    0 &   0   & 0  &  0   & 0   & 0 &   0 &0& 0 & 0.18 & 0.44  &1 \\
& 1000&    0 &   0 &   0 &   0   & 0   & 0 &   0& 0& 0.03 & 0.22 & 0.64  &1 \\
 	\end{tabular}
 	\caption{Rejection proportions for $L = 2$ and $\sigma = 0.05$}
 \end{table} 
 
\end{eg}

\newpage

\begin{eg}
Let $\mathcal{X} = \RR^2$, let $X \sim U( B(0,4) )$, and let $f_3(x) = \| x \|_2$ and $f_4(x) = |x_1| $. Let $Y^i \mid X \sim f_i(X) + \epsilon$ where $\epsilon \sim N(0, \sigma^2)$. Note that $\delta_{f_4} \approx 0.72$. Under the assumption that $f_i$ is $L$-Lipschitz, and with knowledge of $\sigma^2$, we obtain the following proportions of rejections of $H_0 : \delta_f = 0$ at significance level $\alpha = 0.05$ for 100 simulations:

 \begin{table}[h!]
 	\centering
 	\begin{tabular}{cc|cccccc|cccccc}
 	  & &\multicolumn{6}{c}{$n$ with $H_0: f = f_3$}	 & \multicolumn{6}{|c}{$n$ with $H_1: f = f_4$}	 \\
 	  & & 100 & 200 & 300 & 400 & 500 & 1000 & 100 & 200 & 300 & 400 & 500 & 1000 \\ \hline \hline
 	 \multirow{6}{1em}{$m$} & 100 & 0 & 0 & 0 & 0 & 0 & 0 & 1 & 1 & 1 & 1 & 1 & 1 \\ 
     & 200 & 0 & 0 & 0 & 0 & 0 & 0 & 1 & 1 & 1 & 1 & 1 & 1 \\
     & 300 & 0 & 0 & 0 & 0 & 0 & 0 & 1 & 1 & 1 & 1 & 1 & 1 \\
     & 400 & 0 & 0 & 0 & 0 & 0 & 0 & 1 & 1 & 1 & 1 & 1 & 1 \\
     & 500 & 0 & 0 & 0 & 0 & 0 & 0 & 1 & 1 & 1 & 1 & 1 & 1 \\
     & 1000 & 0 & 0 & 0 & 0 & 0 & 0 & 1 & 1 & 1 & 1 & 1 & 1
 	\end{tabular}
 	 \caption{Rejection proportions for $L =1$ and $\sigma = 0.05$}
 \end{table}

  \begin{table}[h!]
 	\centering
 	\begin{tabular}{cc|cccccc|cccccc}
 	  & &\multicolumn{6}{c}{$n$ with $H_0: f = f_3$}	 & \multicolumn{6}{|c}{$n$ with $H_1: f = f_4$}	 \\
 	  & & 100 & 200 & 300 & 400 & 500 & 1000 & 100 & 200 & 300 & 400 & 500 & 1000 \\ \hline \hline
 	 \multirow{6}{1em}{$m$} & 100 & 0.01& 0.01& 0 & 0.02& 0.02& 0.03& 0.31& 0.51& 0.64&  0.61&  0.61&  0.69 \\
							& 200 & 0.02& 0.01& 0.02& 0& 0.02& 0& 0.62& 0.74& 0.79&  0.88&  0.93&  0.97\\
							& 300 & 0.02& 0.02& 0.02& 0.01& 0.01& 0.01& 0.61& 0.78& 0.92&  0.94&  0.97&  0.99\\
							& 400 & 0.01& 0.04& 0& 0& 0.02& 0.03& 0.67& 0.90& 0.95&  0.99&  0.98&  1\\
							& 500 & 0.02& 0.04& 0.03& 0.01& 0.04& 0& 0.68& 0.95& 0.96&  0.98&  1&  1\\
						   & 1000 & 0.03& 0.00& 0.03& 0.00& 0.02& 0.08& 0.85& 0.96& 1&  1&  1&  1
 	\end{tabular}
 	 \caption{Rejection proportions for $L= 1 $ and $\sigma = 1$}
 \end{table}
 
  \begin{table}[h!]
 	\centering
 	\begin{tabular}{cc|cccccc|cccccc}
 	  & &\multicolumn{6}{c}{$n$ with $H_0: f = f_3$}	 & \multicolumn{6}{|c}{$n$ with $H_1: f = f_4$}	 \\
 	  & & 100 & 200 & 300 & 400 & 500 & 1000 & 100 & 200 & 300 & 400 & 500 & 1000 \\ \hline \hline
 	 \multirow{6}{1em}{$m$} & 100 & 0 & 0 & 0 & 0 & 0 & 0 & 1 & 1 & 1 & 1 & 1 & 1 \\ 
     & 200 & 0 & 0 & 0 & 0 & 0 & 0 & 1 & 1 & 1 & 1 & 1 & 1 \\
     & 300 & 0 & 0 & 0 & 0 & 0 & 0 & 1 & 1 & 1 & 1 & 1 & 1 \\
     & 400 & 0 & 0 & 0 & 0 & 0 & 0 & 1 & 1 & 1 & 1 & 1 & 1 \\
     & 500 & 0 & 0 & 0 & 0 & 0 & 0 & 1 & 1 & 1 & 1 & 1 & 1 \\
     & 1000 & 0 & 0 & 0 & 0 & 0 & 0 & 1 & 1 & 1 & 1 & 1 & 1
 	\end{tabular}
 	 	\caption{Rejection proportions for $L = 2$ and $\sigma = 0.05$}
 \end{table}
 
\end{eg}

\end{document}